\documentclass[runningheads]{llncs}
\usepackage[T1]{fontenc}
\usepackage{graphicx}
\usepackage{booktabs}
\usepackage[misc]{ifsym}
\newcommand{\corr}{(\Letter)}
\usepackage{mwe}
\usepackage{algorithm}
\usepackage{algorithmic}
\usepackage{amsmath,amssymb}

\begin{document}

\title{High-Dimensional Bayesian Optimization via
Random Projection of Manifold Subspaces}

\titlerunning{HD-BO via Random Projection of Manifold Subspaces}


\author{Quoc-Anh Hoang Nguyen\inst{1} \and The Hung Tran \corr \inst{2} }


\authorrunning{Nguyen et al.}

\institute{FPT Software AI Center, Vietnam \email{nhquocanh@gmail.com}
\and
HaNoi University of Science and Technology, HUST, Vietnam \email{hungtt@soict.hust.edu.vn}}

\tocauthor{Quoc-Anh~Nguyen,Hung~Tran}

\toctitle{High-Dimensional Bayesian Optimization via Random Projection of Manifold Subspaces}

\maketitle              

\begin{abstract}
Bayesian Optimization (BO) is a popular approach to optimizing expensive-to-evaluate black-box functions. Despite the success of BO, its performance may decrease exponentially as the dimensionality increases. A common framework to tackle this problem is to assume that the objective function depends on a limited set of features that lie on a low-dimensional manifold embedded in the high-dimensional ambient space. The latent space can be linear or more generally nonlinear. To learn feature mapping, existing works usually use an encoder-decoder framework which is either computationally expensive or susceptible to overfitting when the labeled data is limited. This paper proposes a new approach for BO in high dimensions by exploiting a new representation of the objective function. Our approach combines a random linear projection to reduce the dimensionality, with a representation learning of the nonlinear manifold. When the geometry of the latent manifold is available, a solution to exploit this geometry is proposed for representation learning.
In contrast, we use a neural network. To mitigate overfitting by using the neural network, we train the feature mapping in a geometry-aware semi-supervised manner. Our approach enables efficient optimization of BO’s acquisition function in the low-dimensional space, with the advantage of projecting back to the original high-dimensional space compared to existing works in the same setting. Finally, we show empirically that our algorithm outperforms other high-dimensional BO baselines in various synthetic functions and real applications.

\keywords{Bayesian Optimization  \and Gaussian Process \and High-dimensional.}
\end{abstract}

\section{Introduction} \label{sec:intro}
Bayesian optimization (BO) has emerged as a powerful optimization framework to globally maximize expensive and noisy black-box functions. Thanks to its ability to model complex noisy cost functions in a data-efficient manner, BO has been successfully applied in a variety of applications ranging from hyperparameters tuning for machine learning algorithms \cite{NIPS2012_05311655} to the optimization of parametric policies in challenging robotic scenarios \cite{8461237}. However, it is known that BO is exponentially difficult with dimension. BO requires optimizing an acquisition function at each iteration which is a non-convex optimization problem in the same original search space. In high dimensions, optimizing the acquisition functions is really expensive. In addition, inaccurate solutions at the acquisition function optimization step may significantly affect the efficiency of BO algorithms.

Fortunately, in many situations, most dimensions have little impact on the objective function \cite{JMLR:v13:bergstra12a}. Therefore, we can assume that the objective function has effective dimensions that lie on a lower-dimensional manifold. For example, \cite{Carlsson2009TopologyAD} reveals that a large number of $3 \times 3$ images represented as points in $\mathbb{R}^9$ approximately lie on a two-dimensional Klein bottle manifold. As a result, if we apply function $f$ to these $3 \times 3$ images, the function $f$ possesses $2$  ``intrinsic'' degree of freedom and has an effective 2-dimensional Klein bottle manifold.
In these cases, the function is considered as having a lower intrinsic dimensionality. Under such an assumption, a high-dimensional optimization problem can be transformed into a low-dimensional optimization problem. 
Previous studies \cite{Wang2016BayesianOI,Nayebi2019AFF,NEURIPS2020_10fb6cfa} employ random linear embedding for dimensionality reduction, followed by optimization in the embedded subspace with lower dimensionality. These methods leverage the distance-preserving property of random projection, as stated in the Johnson-Lindenstrauss lemma \cite{Johnson1984ExtensionsOL}. On the other hand, \cite{Eriksson2021HighDimensionalBO} argues that the choice of an appropriate surrogate model is critical for achieving good performance. They propose SAASBO, which utilizes sparse priors on the Gaussian Process (GP) length scales. However, their work is also restricted only to a class of functions having an effective linear subspace. 

Recent works have been extended for a more general class of functions that have an effective nonlinear subspace.
\cite{GmezBombarelli2018AutomaticCD,Notin2021ImprovingBO} propose using the variational autoencoders (VAE) to learn manifold subspaces. The key insight of using VAE as a nonlinear embedding is that it decodes from the embedded low-dimensional space to the original high-dimensional space. However, these approaches cannot exploit the geometry property of latent space when it is available, which can lead to suboptimal latent representation.
\cite{moriconi2020high} proposes an encoder-decoder approach for BO but in an online fashion without using unlabeled data. They use a multi-layer neural network (NN) to build an encoder via supervised learning. However, this raises the overfitting issue when the labeled data is scarce. Our approach leverages the power of semi-supervised learning to mitigate these challenges. Besides that, this work uses a multi-output GP for the reconstruction scheme, which is expensive in practice even with its sparse variant. Consequently, their experiments were made with less than 60 dimensions. In contrast, our reconstruction scheme uses only a multiplication of matrix and thus is scalable to very high dimensions $(D \geq 500)$. Additionally, in contrast with the above methods, our method can efficiently exploit the geometry of the manifold when it is available. Compared to current works in \cite{NEURIPS2020_f05da679,Jaquier2021GeometryawareBO} which use geometry awareness to construct projections, our setting is not restricted by a constraint that the low-dimensional effective manifold inherits the geometry of the original manifold.

In this paper, we focus on the high-dimensional Bayesian optimization (HD-BO) problem under the assumption that the objective function $f$ has an effective low-dimensional manifold subspace. The main contributions of this work are as follows:
\begin{itemize}
    \item We propose an effective algorithm for HD-BO based on a new representation of the objective function. Our algorithm uses a random projection to reduce the dimensionality combined with a representation learning of the latent space. Our algorithm with theoretical support can avoid the disadvantages of existing works in the same setting. 
    \item We also propose a novel geometry-aware consistency loss to train the model in a semi-supervised manner to reduce the overfitting issues.
    \item We show empirically that our algorithm outperforms other high-dimensional BO approaches via both linear and non-linear embeddings in various synthetic functions and real applications.
\end{itemize}

\section{Problem Setting}
We consider the global maximization problem of the form
\begin{align}
    x^* = \text{argmax}_{x \in \mathcal X} f(x) 
\end{align}
in a compact search space $\mathcal X = [-1, 1]^D$. In this paper, we are especially concerned about problems with high values of $D$. We consider function $f$ that is black-box and expensive
to evaluate, and our goal is to find the optimum in a minimal number of samples. We further assume that we only have access to noisy evaluations of $f$ in the form $u = f(x) + \epsilon$, where the noise $\epsilon \sim \mathcal N (0, \sigma^2)$ is i.i.d. Gaussian distribution.

To avoid the curse of dimensionality, a common framework is to assume that the objective function depends on a limited set of features that lie on a lower-dimensional manifold embedded in the high-dimensional ambient space. We assume that this latent space can be identified as a $d$-dimensional manifold $\mathcal M$, where $d<< D$. We denote $S_{\mathcal{M}}(x) = \{s \in \mathbb{R}^D \mid s \in \operatorname{argmin}_{m \in \mathcal{M}}||m - x||\}$.  If $|\mathcal{S}_{\mathcal{M}}(x)|=1 \forall x \in \mathcal{X},$ there exists a mapping $P_{\mathcal{M}}: \mathcal{X} \subset \mathbb{R}^D \rightarrow \mathcal{M} \subset \mathbb{R}^D$, which outputs the nearest point on manifold $\mathcal{M}$. We call this mapping the orthogonal projection to manifold $\mathcal{M}$.
We say that the function $f:\mathcal{X}\rightarrow\mathbb{R}$ has the effective latent manifold $\mathcal{M}$ iff $\forall x \in \mathcal{X}, \exists x_{\mathcal{M}} \in \mathcal{S}_{\mathcal{M}}(x): f(x) = f(x_{\mathcal{M}}) $.
We note that this definition is the generalization of that in \cite{Wang2016BayesianOI} and the following works \cite{ijcai2019p596,chen2020semi} which is designed for the functions having a linear latent manifold. 

\subsection{Bayesian Optimization}
BO belongs to the class of sequential-based optimization algorithms. Following the routine of BO, there are two main steps that need to be specified: surrogate function and acquisition function. Gaussian process \cite{RASMUSSEN2005} is a popular choice for the surrogate function due to its tractability for posterior and predictive distribution. Given $t$ pairs of observation points $\mathbf{x_{1:t}}$ and the corresponding evaluation $\mathbf{y_{1:t}} = f(\mathbf{x_{1:t}}) + \epsilon$, where $\epsilon \sim \mathcal{N}(0,\sigma^2)$. We have a joint distribution follow a multivariate normal distribution $f(\mathbf{x_{1:t}}) \sim \mathcal{N}(\mu(\mathbf{x_{1:t}}), K(\mathbf{x_{1:t}}, \mathbf{x_{1:t}}))$ where $\mu(x)$ is the mean function and $K(\mathbf{x_{1:t}}, \mathbf{x_{1:t}})$ is the covariance matrix based on the kernel function $k$. Popular covariance functions include Mat\'ern kernel or squared exponential kernel etc. Given the new observation point $x*$, the posterior predictive distribution can be derived as: $f(x*)|x*,\mathbf{x_{1:t}},f(\mathbf{x_{1:t}}) \sim \mathcal{N}(\mu_{t+1}(x*), \sigma_{t+1}(x*))$ where $\mu_{t+1}(x*) = \mathbf{K_*^T} (\mathbf{K}+\sigma^2 \mathbf{I})^{-1}f(\mathbf{x_{1:t}}) + \mu(x*), \sigma_{t+1}(x*) = k(x*,x*) - \mathbf{K_*^T} (\mathbf{K}+\sigma^2 \mathbf{I})^{-1}\mathbf{K_*}$. In the above expression, we define $\mathbf{K_*} = [k(x*,x_1),...,k(x*,x_t)]^T, \mathbf{K} = K(\mathbf{x_{1:t}}, \mathbf{x_{1:t}})$

The acquisition functions are designed to balance exploration and exploitation at each iteration. Exploration means that we find candidates in an uncertain region whereas exploitation means that we find candidates in a highly probable region. Some examples of acquisition functions include Expected Improvement (EI) \cite{10.1007/3-540-07165-2_55} and GP-UCB \cite{6138914}. An EI acquisition function at iteration $t+1$ is defined as:
\begin{equation}
    \alpha_{t+1}(x) = \sigma_{t+1}(x)\text{pdf}(u) + [\mu_{t+1}(x) - f^*]\text{cdf}(u)
\end{equation}
where $\mu_{t+1}(x)$ is the GP posterior mean, $\sigma_{t+1}(x)$ is the GP posterior variance, $f^* = \max_{i= \overline{1,t}}y_i$, $u = \frac{\mu_{t+1}(x)-f^*}{\sigma_{t+1}(x)}$, pdf(.) and cdf(.) are the standard normal p.d.f and c.d.f, respectively.  
      
\section{Proposed Approach}  \label{sect:Proposed} 
In this section, we present a novel approach for HD-BO. We exploit the effective low dimensional manifold of the objective function for BO by expressing the objective function $f: \mathbb{R}^D \rightarrow \mathbb{R}$ as a composition of an orthogonal projection $h: \mathcal X \rightarrow \mathcal M$, 
where $\mathcal M$ is the $d$-dimensional manifold embedded in $\mathbb{R}^D$ by assumption, and a function $g: \mathcal{M} \rightarrow \mathbb{R}$ so that $f = g \circ h$. Here, $g$ and $h$ are both unknown.  
Since $g$ is still an expensive high-dimensional function, we continue to analyze $g$ as a composition of a projection $\Phi: \mathcal{M} \rightarrow \mathbb R^m$ and a lower dimensional function $g_0: \mathbb R^m \rightarrow \mathbb{R}$, where $d \le m << D$ is a parameter to be defined. As we will discuss below, such a composition of the objective function allows us to exploit the geometrical properties of the manifold and the existing results in the manifold learning field. This composition allows the optimization function to be optimized in the low-dimensional space $\mathbb R^m$ instead of the original high-dimensional parameter space. In the next sections, we will discuss how to represent the mapping $h$, the projection $\Phi$, and, finally propose our BO algorithm.       


\subsection{Representing the mapping $h$} 
\label{subsec: learning_h}
\subsubsection{Geometry-aware representation $h$}
\begin{figure}
    \centering
    \includegraphics[width=0.8\linewidth]{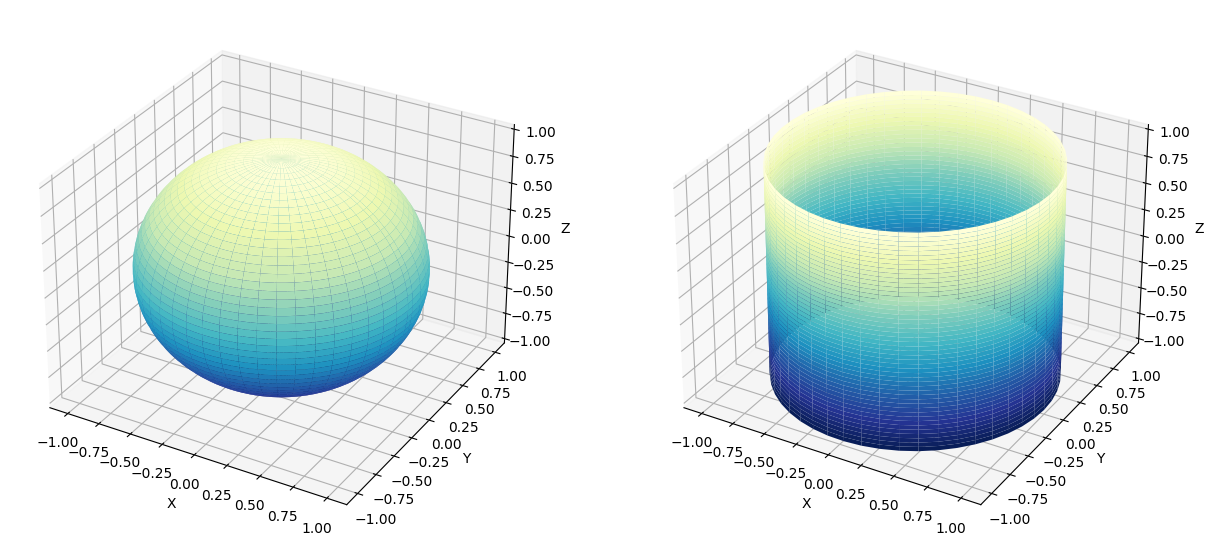}
    \caption{Left: The spherical $2-$dimensional manifold $\mathcal{S}^2$ embedded in $\mathbb{R}^3$. Right: The mixed $2-$dimensional manifold $\mathbf{M}^2$ embedded in $\mathbb{R}^3$ with $(x,y) \in \mathcal{T}^1$ and $z \in R$ .}
    \label{fig:manifold}
\end{figure}
When the geometry of the manifold is known, we can represent analytically the formulation of the projection mapping $h$ to leverage the geometry property of the latent manifold. The visualization of the geometry of some latent manifolds can be seen in Figure \ref{fig:manifold}. Some special types we can easily exploit are spherical geometry, linear geometry, etc. For the linear manifold, which is the most used assumption in the existing works, we can construct the mapping $h$ as:
\begin{equation} \label{eqa:h_linear}
    h(x) = h_{\mathbf{B}}(x) = \mathbf{B}\mathbf{B}^Tx
\end{equation}
where column space of matrix $\mathbf{B} \in \mathbb{R}^{D \times d}$ is a basis of effective subspace. Therefore, the parameter of feature mapping $h$ is matrix $\mathbf{B}$. 

Or for the nonlinear manifold such as spherical geometry, the objective function $f$ can be expressed as:
\begin{equation}  \label{eqa:sphere}
    f(x_1,x_2,...,x_D) = g(z_1,z_2,...,z_{d+1})
\end{equation}
where $z_i = \frac{x_i}{\sqrt{\sum_{i=1}^{d+1} x_i^2}} \quad \forall i=\overline{1,d+1}$.

The target function $f$ depends on only the first $(d+1)$ dimensions which lie on the $d$-dimensional sphere $\mathcal{S}^d$. 
In this case, we can construct the mapping $h$ as:
\begin{equation} \label{eqa:h_sphere}
    h(x) = h_{\mathbf{B},r,c}(x) = r\frac{\mathbf{B}(\mathbf{B}^Tx-c)}{||\mathbf{B}(\mathbf{B}^Tx-c)||_2} + \mathbf{B}c
\end{equation}
where the column space of $\mathbf{B} \in \mathbb{R}^{D \times (d+1)}$ is the basis of an affine subspace containing the effective latent sphere $\mathcal{S}^d$ with radius $r \in \mathbb{R}$ and centroid $c \in \mathbb{R}^{d+1}$.
The learned parameters are $\{\mathbf{B}, r, c\}$. 
\subsubsection{Geometry-unaware representation $h$}
In general, when the geometry of latent manifolds is unknown, we use a multiple-layer neural network (NN) to model $h$ as in previous works of \cite{inproceedings,moriconi2020high}. Neural networks have already been applied successfully for modeling non-smooth responses in robot locomotion \cite{inproceedings}, and have also proven useful for learning the orientation of images from high-dimensional images \cite{wilson16}.
\subsubsection{Learning $h$}
To learn $h,$ we learn jointly $h$ and the high-dimensional representation $g$ of $f$ using the same supervised loss function as stated in \cite{inproceedings}. Particularly, we define a variant of manifold Gaussian Process (mGP) \cite{inproceedings} under our new composition so that $f \sim \mathcal{GP} (\mu_m, k_m)$ with mean function $\mu_m: \mathcal X \rightarrow \mathbb{R}$ and covariance function $k_m: \mathcal X \times \mathcal X \rightarrow \mathbb{R}$ defined as $\mu_m(x) = \mu(h(x))$ and $k_m(x_i, x_j) = k(h(x_i), h(x_j))$, with $\mu: \mathcal M \rightarrow \mathbb{R}$ and $k: \mathcal M \times \mathcal M \rightarrow \mathbb{R}$ a kernel function. The supervised loss function is the negative marginal likelihood:
\begin{equation} \label{super_loss}
    L_s(\theta_h, \theta_g) = -p(y| X, \theta_h, \theta_g)
\end{equation}
where $\theta_h$ and $\theta_g$ are the parameters of $h$ and $g$, respectively; $y$ is the noisy observation of the objective function $f$. Although mapping $h$ can be trained in a supervised manner, it is prone to overfitting due to the lack of labeled training data at the first few iterations of the algorithm.

To overcome this, we take the view of semi-supervised learning (SSL) \cite{Thomas2009}, which alleviates the necessity for labeled data by enabling the model to utilize unlabeled data. SSL has shown much promise in improving machine learning models when the labeled data is scarce \cite{Antti2017,Samuli2017,Xie2020}. One common approach of SSL is the use of consistency loss \cite{Xie2020,Berthelot2019} on a large amount of unlabeled data to constrain model predictions to be noise-invariant. In this work, we introduce a novel unsupervised consistency loss that can leverage the property of manifold. Indeed, we have the following property:
\begin{proposition} \cite{Leobacher2021} \label{prop:orthogonal_line}
    Let manifold $\mathcal{M} \subset \mathbb{R}^D$. Let $x \in \mathbb{R}^D$ arbitrary and $x_{\mathcal{M}} \in \mathcal{S}_{\mathcal{M}}(x)$. Then $\forall \lambda \in [0,1)$ , $P_{\mathcal{M}}(\lambda x + (1 - \lambda)x_{\mathcal{M}}) = x_{\mathcal{M}}.$ 
\end{proposition}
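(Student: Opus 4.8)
The plan is to reduce the whole statement to a single scalar inequality. I introduce the fixed vector $a = x - x_{\mathcal{M}}$ and, for an arbitrary competitor $m \in \mathcal{M}$, the vector $v = m - x_{\mathcal{M}}$, and I write the test point as $y_\lambda = \lambda x + (1-\lambda)x_{\mathcal{M}} = x_{\mathcal{M}} + \lambda a$. The first step is to extract what it means for $x_{\mathcal{M}}$ to be a nearest point to $x$: from $\|m - x\|^2 \ge \|x_{\mathcal{M}} - x\|^2$ for all $m \in \mathcal{M}$, together with the expansion $\|m-x\|^2 = \|v-a\|^2 = \|v\|^2 - 2\langle v, a\rangle + \|a\|^2$ and $\|x_{\mathcal{M}}-x\|^2 = \|a\|^2$, I obtain the key estimate $\|v\|^2 \ge 2\langle v, a\rangle$, valid for every $m \in \mathcal{M}$.

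Next I compute the quantity that decides the proposition, namely $\|m - y_\lambda\|^2 - \|x_{\mathcal{M}} - y_\lambda\|^2$. Since $m - y_\lambda = v - \lambda a$ and $x_{\mathcal{M}} - y_\lambda = -\lambda a$, a direct expansion gives $\|m - y_\lambda\|^2 - \|x_{\mathcal{M}} - y_\lambda\|^2 = \|v\|^2 - 2\lambda\langle v, a\rangle$. Everything now hinges on showing this expression is nonnegative, and strictly positive when $m \neq x_{\mathcal{M}}$.

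To close it I split on the sign of $\langle v, a\rangle$. If $\langle v, a\rangle \le 0$, then $-2\lambda\langle v, a\rangle \ge 0$ because $\lambda \ge 0$, so the expression is at least $\|v\|^2 \ge 0$. If $\langle v, a\rangle > 0$, then using $\lambda < 1$ and the key estimate I get $2\lambda\langle v, a\rangle < 2\langle v, a\rangle \le \|v\|^2$, so the expression is again nonnegative. This shows $x_{\mathcal{M}} \in \mathcal{S}_{\mathcal{M}}(y_\lambda)$. For the projection claim I upgrade this to uniqueness: suppose equality $\|v\|^2 = 2\lambda\langle v, a\rangle$ holds. The subcase $\langle v, a\rangle > 0$ is impossible, since it would force $\lambda \ge 1$ via $2\lambda\langle v, a\rangle = \|v\|^2 \ge 2\langle v, a\rangle$; and if $\langle v, a\rangle \le 0$ the right-hand side is nonpositive while $\|v\|^2 \ge 0$, forcing $v = 0$, i.e. $m = x_{\mathcal{M}}$. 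Hence $\mathcal{S}_{\mathcal{M}}(y_\lambda) = \{x_{\mathcal{M}}\}$ and $P_{\mathcal{M}}(y_\lambda) = x_{\mathcal{M}}$.

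The step I expect to be delicate is the uniqueness argument, because the strict inequality $\lambda < 1$ carries all the weight there. At $\lambda = 1$ one lands back at $x$, which may genuinely have several nearest points on $\mathcal{M}$, so the result is sharp; the care lies in invoking the strictness at exactly the point where $\langle v, a\rangle > 0$, since that is the only place the gap between $2\lambda\langle v,a\rangle$ and $2\langle v,a\rangle$ can fail to be strict.
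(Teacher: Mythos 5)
Your proof is correct. Note that the paper itself supplies no argument for this proposition: it is stated with a citation to Leobacher and Steinicke and left unproved, so there is nothing internal to compare against line by line. What you have written is a self-contained, purely Euclidean argument: the reduction to the scalar quantity $\|v\|^2 - 2\lambda\langle v,a\rangle$ is clean, the key estimate $\|v\|^2 \ge 2\langle v,a\rangle$ is exactly the right extraction of the hypothesis $x_{\mathcal{M}} \in \mathcal{S}_{\mathcal{M}}(x)$, and the case split on the sign of $\langle v,a\rangle$ together with the strictness of $\lambda<1$ correctly delivers both membership and uniqueness, so that $\mathcal{S}_{\mathcal{M}}(y_\lambda)$ is a singleton and $P_{\mathcal{M}}(y_\lambda)$ is well-defined at $y_\lambda$ even if $x$ itself has several nearest points. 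This is arguably stronger than what the cited reference is organized around: your argument never uses differentiability or any manifold structure of $\mathcal{M}$ at all, only that $\mathcal{M}$ is a subset of $\mathbb{R}^D$ and that a nearest point to $x$ exists, whereas the cited work is concerned with existence, uniqueness, and regularity of the projection onto differentiable submanifolds, where the delicate issues (reach, medial axis) live elsewhere. The one observation worth keeping explicit is the one you already flag: sharpness at $\lambda=1$, where the gap between $2\lambda\langle v,a\rangle$ and $2\langle v,a\rangle$ closes and uniqueness can genuinely fail.
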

Intuitively, proposition \ref{prop:orthogonal_line} states that the orthogonal projection of all points lies in the line segment between $x$ and $x_{\mathcal{M}}$ is exactly $x_{\mathcal{M}}$. Therefore, to learn the orthogonal projection $h$, we can use the discretization technique and construct the following unsupervised consistency loss:
\begin{equation} \label{unsuper_loss}
    L_{us}(\theta_h) = \frac{1}{pq}\Sigma_{j=1}^p\Sigma_{i=1}^q \|h(\lambda_j x_i' + (1 - \lambda_j)h(x_i')) - h(x_i')\|_2
\end{equation}
where $\theta_h$ is the parameter of $h$, $\lambda_j \sim \operatorname{Uniform}(0,1) \forall j = \overline{1,p}$ and $x'_i  \sim \operatorname{Uniform}(\mathcal{X}) \forall i = \overline{1,q}$. The combined loss $L$ for semi-supervised learning mapping $h$ is defined as:
\begin{equation} \label{semisuper_loss}
    L(\theta_h, \theta_g) = L_s(\theta_h, \theta_g) + \gamma L_{us}(\theta_h)
\end{equation}
where $L_s$ is from Equation (\ref{super_loss}), $L_{us}$ is from Equation (\ref{unsuper_loss}), $\gamma$ is a weighting factor to balance the supervised loss and the unsupervised consistency loss. With geometry-aware representation at Equations (\ref{eqa:h_linear}) and (\ref{eqa:h_sphere}), the unsupervised consistency loss $L_{us}(\theta_h) = 0$. Therefore, we do not need unlabeled data to train the mapping $h$. \textbf{We will show that training with semi-supervised loss function in Equation (\ref{semisuper_loss}) can reduce the overfitting issue in Appendix D.4.}

\subsection{Representing the projection $\Phi$} \label{sect:contraction}
Defining the projection $\Phi: \mathcal{M} \rightarrow \mathbb R^m$ is important because it is related to the regression process of the function $g$, and thus of the objective function $f$. 
Following the existing work of the manifold learning \cite{10.1214/15-AOS1390}, to achieve a convergence guarantee of the regression of $g$, we need a condition that the reflection of $\Phi$ on the manifold $\mathcal{M}$ is a diffeomorphism. We propose using a random projection via a random orthogonal matrix $\mathbf{A}$, which can preserve the pairwise metric distance of sample points on the manifold with high probability. Let $\mathbf{A}$ be a random orthogonal matrix from $\mathbb{R}^D$ to $\mathbb{R}^m$ with $m$ satisfies $m = \mathcal O(d\text{log}(D))$ \cite{Baraniuk2009RandomPO}, then with the mild condition of manifold $\mathcal{M}$, with high probability, the following relationship holds on every point $x_{\mathcal{M}}, y_{\mathcal{M}} \in \mathcal{M}$:
\begin{equation} \label{eqa:preserve}
    (1-\epsilon)\sqrt{\frac{m}{D}} \leq \frac{\|\mathbf{A}x_{\mathcal{M}}-\mathbf{A}y_{\mathcal{M}}\|_2}{\|x_{\mathcal{M}}-y_{\mathcal{M}}\|_2} \leq (1+\epsilon)\sqrt{\frac{m}{D}}
\end{equation}
By the work of \cite{JMLR:v17:14-230}, Equation (\ref{eqa:preserve}) implies that with high probability, the projection dimension $\mathbf{A}$ is a diffeomorphism onto its image. Due to the space limitation, in Appendix B, we will discuss in detail the diffeomorphism as well as this random projection.

\subsection{Proposed Bayesian Optimization Algorithm using Random Projection}
In this subsection, we present our new Bayesian optimization algorithm using the random projection we defined above. 

We construct a GP $g_0 \sim \mathcal{GP}(\mu(.), k_a(||.||) =\exp\{-a^2||.||^2\} )$ with the training data $\{\mathbf{A}h(x_i), f(x_i) + \epsilon_i\}_{i=1}^n$ and $a \sim Ga(a_0,b_0)$. Note that in practice, \cite{JMLR:v17:14-230} demonstrated that replacing the powered gamma prior for $a$ with a gamma prior has no effect on the results. As we will show in the Appendix B.1, $g_0$ can achieve an optimal convergence rate, and importantly we have constructed a GP in lower dimensional space $\mathbb{R}^m$. Therefore, we can efficiently optimize the acquisition in the low-dimensional space according to GP $g_0$.
\paragraph{Selecting next points and back projection scheme.}
We define $\mathcal{M}_{A} = \{z \in \mathbb{R}^m | \exists x \in \mathcal{M} \subset \mathbb{R}^D: z = \mathbf{A}x\}$. Intuitively, $\mathcal{M}_A$ is an image of manifold $\mathcal{M}$ through a linear embedding $\mathbf{A}$. By leveraging the distance-preserving property of matrix $\mathbf{A}$, we can ensure a convergence of the lower dimensional GP defined on $\mathcal{M}_A$ $g_0 \sim \mathcal{GP}(\mu(.), k_a(||.||) =\exp\{-a^2||.||^2\} )$ with the training data $\{\mathbf{A}h(x_i), f(x_i) + \epsilon_i\}_{i=1}^n$ and $a \sim Ga(a_0,b_0)$. 

A direct way to select the next points is to use the traditional EI acquisition function on low-dimensional space $\mathbb{R}^m$. However, we cannot ensure that these selected points lie on $\mathcal{M}_{A}$. Therefore, a challenge in this step is to select points on  $\mathcal{M}_{A}$ by solving the following problem:
\begin{equation} \label{eqa:EI_hard}
    z_{n+1} = \operatorname{argmax}_{z \in \mathcal{M}_A} \operatorname{EI}(z|\{\mathbf{A}h(x_i),y_i\}_{i=1}^n)
\end{equation}
Solving problem \ref{eqa:EI_hard} is hard since the domain $\mathcal{M}_A$ is complicated. To simplify problem (\ref{eqa:EI_hard}), 
we have the following theorems.
\begin{theorem} \label{theo:exists}
    Let $\mathbf{A} \in \mathbb{R}^{m \times D}$ be a random orthogonal matrix. Let $\mathcal{M} \subset \mathbb{R}^D$ be the $d-$dimensional manifold which can be embedded in Euclidean space $\mathbb{R}^m$. Then, with probability 1, $\forall x \in \mathcal{M}, \exists z \in \mathbb{R}^m: x = P_{\mathcal{M}}(\mathbf{A}^Tz).$
\end{theorem}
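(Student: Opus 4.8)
The plan is to read $x = P_{\mathcal M}(\mathbf A^T z)$ as the statement that $x$ is the \emph{global} minimiser of $x'\mapsto\|\mathbf A^T z - x'\|_2$ over $\mathcal M$, and to produce, for each prescribed $x\in\mathcal M$, a suitable $z$. Throughout I use that a random orthogonal $\mathbf A$ satisfies $\mathbf A\mathbf A^T=I_m$, so $\mathbf A^T$ is a linear isometry of $\mathbb R^m$ onto the subspace $V:=\{\mathbf A^T z: z\in\mathbb R^m\}$ and $\mathbf A^T\mathbf A$ is the orthogonal projector onto $V$. The identity organising everything is, for any $z$ and any $x'\in\mathcal M$,
\begin{equation}
\|\mathbf A^T z - x'\|_2^2 = \|z - \mathbf A x'\|_2^2 + \|(I-\mathbf A^T\mathbf A)x'\|_2^2 ,
\end{equation}
obtained by splitting $x'$ into its $V$ and $V^\perp$ components and using $\|\mathbf A^T u\|_2=\|u\|_2$. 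Thus choosing $z$ amounts to a nearest-point search \emph{inside} $\mathbb R^m$ against the compact image $\mathcal M_A=\mathbf A\mathcal M$, perturbed by the penalty $\phi(x'):=\|(I-\mathbf A^T\mathbf A)x'\|_2^2$ measuring how far $x'$ protrudes from $V$.

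First I would establish the first-order (local) solvability, which is where ``probability $1$'' enters. Writing stationarity of $x'\mapsto\|\mathbf A^T z - x'\|_2^2$ in a local parametrisation $\gamma$ of $\mathcal M$ at $x$ gives the linear system $\langle z,\ \mathbf A\,\partial_i\gamma\rangle = \langle x,\ \partial_i\gamma\rangle$, $i=1,\dots,d$, in the unknown $z$. By the random-projection embedding result recalled before the theorem (Eq.~(\ref{eqa:preserve})), with probability one $\mathbf A$ is injective on every tangent space $T_x\mathcal M$, so the $d$ vectors $\mathbf A\,\partial_i\gamma$ are independent; since $m\ge d$ the system is consistent, and its solution set is an $(m-d)$-dimensional affine family of candidates $p=\mathbf A^T z\in V$ with $p-x$ orthogonal to $T_x\mathcal M$. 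Equivalently, in $\mathbb R^m$ one takes $z=\mathbf Ax+\eta$, where $\eta$ carries a tangential correction matching $\tfrac12\nabla\phi(x)$ together with an arbitrary component normal to $\mathcal M_A$ at $\mathbf Ax$.

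The main obstacle is upgrading such a critical point to the \emph{global} minimiser, i.e.\ ruling out that a far-away $x'\in\mathcal M$ beats $x$ because its penalty $\phi(x')$ is much smaller; note that the naive choice $z=\mathbf Ax$ already fails for this reason. This is exactly the step that collapses without embeddability: for $\mathcal M=\mathcal S^1$ and $m=d=1$ the projector $P_{\mathcal M}$ restricted to a line hits only two antipodal points, so $\mathcal M$ is not covered, and indeed $\mathcal S^1$ is \emph{not} embeddable in $\mathbb R^1$. Embeddability of $\mathcal M$ in $\mathbb R^m$ is precisely what makes $\mathcal M_A$ an embedded compact submanifold of $\mathbb R^m$ with positive reach. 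I would control the penalty through Eq.~(\ref{eqa:preserve}): since $\|\mathbf A(x'-y')\|_2^2\approx\tfrac mD\|x'-y'\|_2^2$, the complementary projection $(I-\mathbf A^T\mathbf A)|_{\mathcal M}$ is itself a near isometry of scale $\sqrt{1-m/D}$, which keeps $\mathrm{reach}(\mathcal M_A)$ bounded below so that the admissible normal displacement in $\mathbb R^m$ exceeds the tangential correction $\tfrac12\nabla\phi$. Choosing $\eta$ with a normal part just below $\mathrm{reach}(\mathcal M_A)$ then makes $\mathbf Ax$ the nearest point of $\mathcal M_A$ to $z$ with a strict margin dominating the now-controlled penalty gap, forcing $x=P_{\mathcal M}(\mathbf A^T z)$.

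Finally I would note two alternative finishes: either the pointwise ``reach versus penalty'' estimate above, or a softer topological argument in which the rank condition makes $P_{\mathcal M}|_V$ a submersion, hence an open map onto $\mathcal M$ whose image is closed by compactness and positive reach, and therefore all of each connected component. I expect the genuinely delicate point to be reconciling the high-probability bound (\ref{eqa:preserve}) with the claimed almost-sure conclusion: the clean ingredients (consistency of the linear system, the immersion property, openness) already hold with probability one, whereas the global-minimality step must ultimately be phrased using only those almost-sure embedding and immersion facts rather than the merely high-probability distance distortion.
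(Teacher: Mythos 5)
Your reduction to a nearest-point problem in $\mathbb{R}^m$ via the identity $\|\mathbf A^T z - x'\|_2^2 = \|z - \mathbf A x'\|_2^2 + \|(I-\mathbf A^T\mathbf A)x'\|_2^2$ is correct, and your first-order analysis (consistency of the stationarity system with probability one) is sound; your $\mathcal S^1$, $m=1$ example also correctly identifies where embeddability is essential. But the global-minimality step — the part you yourself flag as delicate — has a genuine gap, and it is precisely the step the paper resolves by a different, much more rigid use of the embeddability hypothesis. Your "reach versus penalty" argument asks the normal displacement available in $\mathbb{R}^m$ (bounded by $\mathrm{reach}(\mathcal M_A)$, which after the $\sqrt{m/D}$ contraction is a fixed, typically small quantity) to produce a squared-distance margin at $\mathbf A x$ that dominates the variation of the penalty $\phi(x')=\|(I-\mathbf A^T\mathbf A)x'\|_2^2$ over all of $\mathcal M$; since $m\ll D$ that variation is of order $\mathrm{diam}(\mathcal M)^2$, and nothing in your sketch shows the margin can be made that large. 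Moreover this route leans on the high-probability bound (\ref{eqa:preserve}), which, as you note, cannot by itself yield the almost-sure conclusion claimed in Theorem~\ref{theo:exists}.

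The paper sidesteps all of this by reading "$\mathcal M$ can be embedded in $\mathbb{R}^m$" as "$\mathcal M$ is contained in an $m$-dimensional affine subspace $\mathcal L\subset\mathbb{R}^D$" (consistent with its examples, e.g.\ $\mathcal S^d$ inside a $(d+1)$-dimensional affine subspace). Writing $\mathcal L=\{\Sigma q+\alpha\}$ with $\Sigma\in\mathbb{R}^{D\times m}$ orthonormal, a rank lemma ($\mathrm{rank}(\Sigma^T\mathbf A^T)=m$ almost surely) shows that for any target $x^*\in\mathcal M$ one can choose $z$ so that the orthogonal projection of $\mathbf A^Tz$ onto $\mathcal L$ is exactly $x^*$. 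Then for every competitor $x_{\mathcal M}\in\mathcal M\subset\mathcal L$, Pythagoras gives $\|\mathbf A^Tz-x_{\mathcal M}\|^2=\|\mathbf A^Tz-x^*\|^2+\|x_{\mathcal M}-x^*\|^2$, so $x^*$ is automatically the unique global minimiser: the "penalty" is constant across $\mathcal L$ rather than something to be dominated, no reach estimate is needed, and only the almost-sure rank condition enters. To repair your proof you would need either to adopt this affine-containment reading of the hypothesis (in which case your decomposition collapses to the paper's Pythagorean identity), or to supply a genuinely quantitative global argument that you have not yet given. You should also address uniqueness of the minimiser, which the paper proves and which is needed for $P_{\mathcal M}(\mathbf A^Tz)$ to be well defined.
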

\begin{proof}
    The full proof is given in Appendix A.
\end{proof}
\begin{theorem}\label{prop:dual}
    Let $\mathcal{M} \subset \mathbb{R}^D$ is the $d-$dimensional manifold, which can be embedded in Euclidean space $\mathbb{R}^m$; $\mathbf{A} \in \mathbb{R}^{D \times m}$ is the random orthogonal matrix; $\mathcal{M}_A = \{z \in \mathbb{R}^m|\exists x \in \mathcal{M}: z = \mathbf{A}x\}$; $\overline{\mathcal{M}_A} = \{z \in \mathbb{R}^m|\exists q \in \mathbb{R}^m: z = \mathbf{A}P_{\mathcal{M}}(\mathbf{A}^Tq)\}$. Then, with probability 1, $\mathcal{M}_A=\overline{\mathcal{M}_A}$.
\end{theorem}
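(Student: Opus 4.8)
The plan is to prove the equality $\mathcal{M}_A = \overline{\mathcal{M}_A}$ by a double inclusion, observing that one direction is purely set-theoretic and deterministic while the other is an immediate consequence of Theorem~\ref{theo:exists}. (Throughout I read $\mathbf{A}$ as the $m \times D$ matrix so that $\mathbf{A}x \in \mathbb{R}^m$ for $x \in \mathbb{R}^D$ and $\mathbf{A}^T q \in \mathbb{R}^D$ for $q \in \mathbb{R}^m$, which is what both set definitions require; the ``$D \times m$'' in the statement is a transpose typo.)

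First I would dispatch the inclusion $\overline{\mathcal{M}_A} \subseteq \mathcal{M}_A$, which needs no probabilistic hypothesis. If $z \in \overline{\mathcal{M}_A}$, then by definition $z = \mathbf{A}\, P_{\mathcal{M}}(\mathbf{A}^T q)$ for some $q \in \mathbb{R}^m$. Since the orthogonal projection $P_{\mathcal{M}}$ always outputs a point of $\mathcal{M}$, the vector $x := P_{\mathcal{M}}(\mathbf{A}^T q)$ lies in $\mathcal{M}$, and $z = \mathbf{A}x$ then exhibits $z$ as a member of $\mathcal{M}_A$. This uses only the codomain of $P_{\mathcal{M}}$ and no property of $\mathbf{A}$ whatsoever.

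The reverse inclusion $\mathcal{M}_A \subseteq \overline{\mathcal{M}_A}$ is where Theorem~\ref{theo:exists} does the work. Let $\Omega$ denote the probability-$1$ event furnished by Theorem~\ref{theo:exists}, on which \emph{every} $x \in \mathcal{M}$ admits some $z_x \in \mathbb{R}^m$ with $x = P_{\mathcal{M}}(\mathbf{A}^T z_x)$. Fix an arbitrary $z \in \mathcal{M}_A$; then $z = \mathbf{A}x$ for some $x \in \mathcal{M}$. On $\Omega$, pick the corresponding $z_x$ and left-multiply $x = P_{\mathcal{M}}(\mathbf{A}^T z_x)$ by $\mathbf{A}$ to obtain $z = \mathbf{A}x = \mathbf{A}\,P_{\mathcal{M}}(\mathbf{A}^T z_x)$, which is exactly the defining form of an element of $\overline{\mathcal{M}_A}$. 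Hence $\mathcal{M}_A \subseteq \overline{\mathcal{M}_A}$ on $\Omega$.

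Combining the two inclusions on $\Omega$ gives $\mathcal{M}_A = \overline{\mathcal{M}_A}$, and since $\mathbb{P}(\Omega) = 1$ the equality holds with probability $1$. The only point demanding care is that the almost-sure quantifier in Theorem~\ref{theo:exists} must range over all $x \in \mathcal{M}$ \emph{simultaneously} on a single event $\Omega$, so that it can be invoked for every $z \in \mathcal{M}_A$ at once without eroding the measure-one guarantee. I do not expect a genuine obstacle in this proof, since all of the analytic content, namely the surjectivity of $P_{\mathcal{M}} \circ \mathbf{A}^T$ onto $\mathcal{M}$, is already packaged inside Theorem~\ref{theo:exists}; the present statement is essentially its dual reformulation obtained by post-composing with $\mathbf{A}$.
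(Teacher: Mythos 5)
Your proof is correct and follows essentially the same route as the paper's: the inclusion $\overline{\mathcal{M}_A} \subseteq \mathcal{M}_A$ from the fact that $P_{\mathcal{M}}$ lands in $\mathcal{M}$, and the reverse inclusion by invoking Theorem~\ref{theo:exists} to write each $x \in \mathcal{M}$ as $P_{\mathcal{M}}(\mathbf{A}^Tq)$ and post-composing with $\mathbf{A}$. Your added remark that the almost-sure event must cover all $x \in \mathcal{M}$ simultaneously is a worthwhile precision that the paper leaves implicit, but it does not change the argument.
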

\begin{proof}
    Let $z \in \overline{\mathcal{M}_A} \rightarrow \exists q \in \mathbb{R}^m: z = \mathbf{A}P_{\mathcal{M}}(\mathbf{A}^Tq) \subset \mathcal{M}_A \rightarrow \overline{\mathcal{M}_A} \subseteq \mathcal{M}_A$.\\
    Let $z \in \mathcal{M}_A \rightarrow \exists x \in \mathcal{M}: z = \mathbf{A}x$. From Theorem \ref{theo:exists}, with probability 1, $\exists q \in \mathbb{R}^m: x = P_{\mathcal{M}}(\mathbf{A}^Tq) \rightarrow z = \mathbf{A}x = \mathbf{A}P_{\mathcal{M}}(\mathbf{A}^Tq) \rightarrow z \in \overline{\mathcal{M}_A} \rightarrow \mathcal{M}_A \subseteq \overline{\mathcal{M}_A}$ with probabilty 1. 
\end{proof}
Now with the probability 1, if $h$ approximates $P_{\mathcal{M}}$, we have:
\begin{align*}
    &\operatorname{argmax}_{z \in \mathcal{M}_A} \operatorname{EI}(z|\{\mathbf{A}h(x_i),y_i\}_{i=1}^n)\\
    = &\operatorname{argmax}_{z \in \overline{\mathcal{M}_A}} \operatorname{EI}(z|\{\mathbf{A}h(x_i),y_i\}_{i=1}^n)\\
    \approx &\operatorname{argmax}_{q \in \mathbb{R}^m; z = \mathbf{A}h(\mathbf{A}^Tq)} \operatorname{EI}(z|\{\mathbf{A}h(x_i),y_i\}_{i=1}^n)\\
    = &\operatorname{argmax}_{q \in \mathbb{R}^m} \operatorname{EI}(\mathbf{A}h(\mathbf{A}^Tq)|\{\mathbf{A}h(x_i),y_i\}_{i=1}^n)
\end{align*}
where the first equality happens due to theorem \ref{prop:dual} and the second equality happens due to the assumption $h$ approximates $P_{\mathcal{M}}$. Therefore, instead of solving problem \ref{eqa:EI_hard}, we will solve the below problem:
\begin{equation} \label{eqa:EI_easy}
    z_{n+1} = \operatorname{argmax}_{z \in \mathbb{R}^m} \operatorname{EI}(\mathbf{A}h(\mathbf{A}^Tz)|\{\mathbf{A}h(x_i),y_i\}_{i=1}^n)
\end{equation}
Note that we can optimize problem \ref{eqa:EI_easy} on the domain $[-\sqrt{m},\sqrt{m}]^m$ as stated in \cite{Wang2016BayesianOI}. After finding the solution to problem \ref{eqa:EI_easy}, we project it back to the high-dimensional space as $h(\mathbf{A}^Tz)$ for the evaluation.
Our back projection scheme is simple and convenient in contrast with heavy computing input reconstruction proposed by previous works \cite{moriconi2020high,NEURIPS2020_f05da679}. Our algorithm called the \textbf{RPM-BO} (\textbf{R}andom \textbf{P}rojection of \textbf{M}anifold-based \textbf{BO}), is presented in Algorithm 1.

\begin{algorithm}[tb]
 \caption{RPM-BO Algorithm}  \label{alg1}
 \textbf{Input:} $\mathcal{H} = \emptyset$;  initial points $\{x_1,x_2,...,x_s\} \subset \mathcal{X} \subset \mathbb{R}^D$; $a_0, b_0, \gamma, p, q \in \mathbb{R}$.\\
 \textbf{Output}: Final recommendation $x_N$.\\
 \begin{algorithmic}[1]
 \STATE Create $q$ random unlabeled data points $\{x_i'\}_{i=1}^q \in \mathcal{X}$.
 \STATE Create $p$ random coefficients $\{\lambda_j\}_{j=1}^p \in (0,1)$.
 \STATE Choose a suitable $m$ and create a random orthogonal matrix $\mathbf{A} \in \mathbb{R}^{m \times D}$. \\
 \STATE Define a parameterized mapping $h: \mathbb{R}^D \rightarrow \mathbb{R}^D$. \\
 \STATE Set $n:=s$
 \WHILE{$n \leq N$}
 \STATE Update the hyperparameters $\theta_h$ by minimize semi-supervised loss function in Equation (\ref{semisuper_loss}) with the history data $\mathcal{H}$.\\
 \STATE Construct a Gaussian Process $g_0(.) \sim \mathcal{GP}(\mu_0(.), k_a(.,.))$ where $k_a(z,z')=\exp\{-a^2||z-z'||^2\}$ and $a \sim Ga(a_0,b_0)$ with the training data $\{\mathbf{A}h(x_i), y_i\}_{i=1}^{n}$.\\
 \STATE Select the next query point by optimizing the acquisition function: $z_{n+1} = \operatorname{argmax}_{z \in [-\sqrt{m},\sqrt{m}]^m} \operatorname{EI} (\mathbf{A}h(\mathbf{A}^Tz))$.\\
 \STATE Project back to high-dimensional space: $x_{n+1} = h(\mathbf{A}^{\top} z_{n+1}$). \\
 \STATE Evaluate $y_{n+1} = f(x_{n+1}) + \epsilon_i$ \\
 \STATE Augment the set of observed data $\mathcal{H} = \mathcal{H} \cup (x_{n+1}, y_{n+1})$ \\
 \STATE $n := n + 1$ \\
 \ENDWHILE
 \end{algorithmic}
 \end{algorithm}

\subsection{Discussion} \label{sect:discuss}
\paragraph{The advantage of using random matrix $\mathbf{A}$.} First, the random matrix is easy to implement. Despite this simplicity, we can learn exactly the function $g$ via a low-dimensional GP regression as long as all data points suggested from $h$ lie on manifold $\mathcal{M}$ as we show in the Appendix B.1. Another advantage of using the random matrix is when data points $h(\mathcal{X})$ are not on $\mathcal{M}$, the random projection $\mathbf{A}$ can reduce the negative effects of noise on the convergence of the low-dimensional GP regression as mentioned in \cite{JMLR:v17:14-230}. This is an advantage of random projection compared to the other dimensionality reduction techniques, which can compress the noise of the near-manifold data points. 

\paragraph{On the correctness of Theorem \ref{prop:dual}.} Theorem \ref{prop:dual} holds when the $d$-dimensional manifold $\mathcal{M}$ can be embedded in Euclidean $\mathbb{R}^m$. We define $e(\mathcal{M})$ is the smallest integer that $\mathcal{M}$ can be embedded in $\mathbb{R}^{e(\mathcal{M})}$. According to Whitney embedding theorem \cite{Persson2014TheWE}, we have $e(\mathcal{M}) \leq 2d$. Therefore, if $m \geq 2d$, which is much lower than the original high-dimensional $D$, then every $d-$dimensional manifold can be embedded in $\mathbb{R}^m$. Moreover, $e(\mathcal{M})$ depends on the structure of the manifold and can be lower than the result of Whitney. For example, $e(\mathcal{S}^d) = d+1$ where $\mathcal{S}^d$ is the $d-$dimensional sphere.
\paragraph{On the choice of the parameter $m$.} The embedded dimension $m$ also needs to ensure the distance-preserving property stated in Equation (\ref{eqa:preserve}). In practice, we choose $m$ as large as possible, which is a common approach in random projection-based BO \cite{Wang2016BayesianOI,Nayebi2019AFF,NEURIPS2020_10fb6cfa}. A test on different values of $m$ is also performed in Section D.2 of the Appendix.

\paragraph{On the convergence of the proposed RPM-BO algorithm}
We discuss this problem in Section B.3 of the Appendix.

\section{Experiments}
\begin{figure}[h!]
    \centering
    \includegraphics[width=1.0\linewidth]{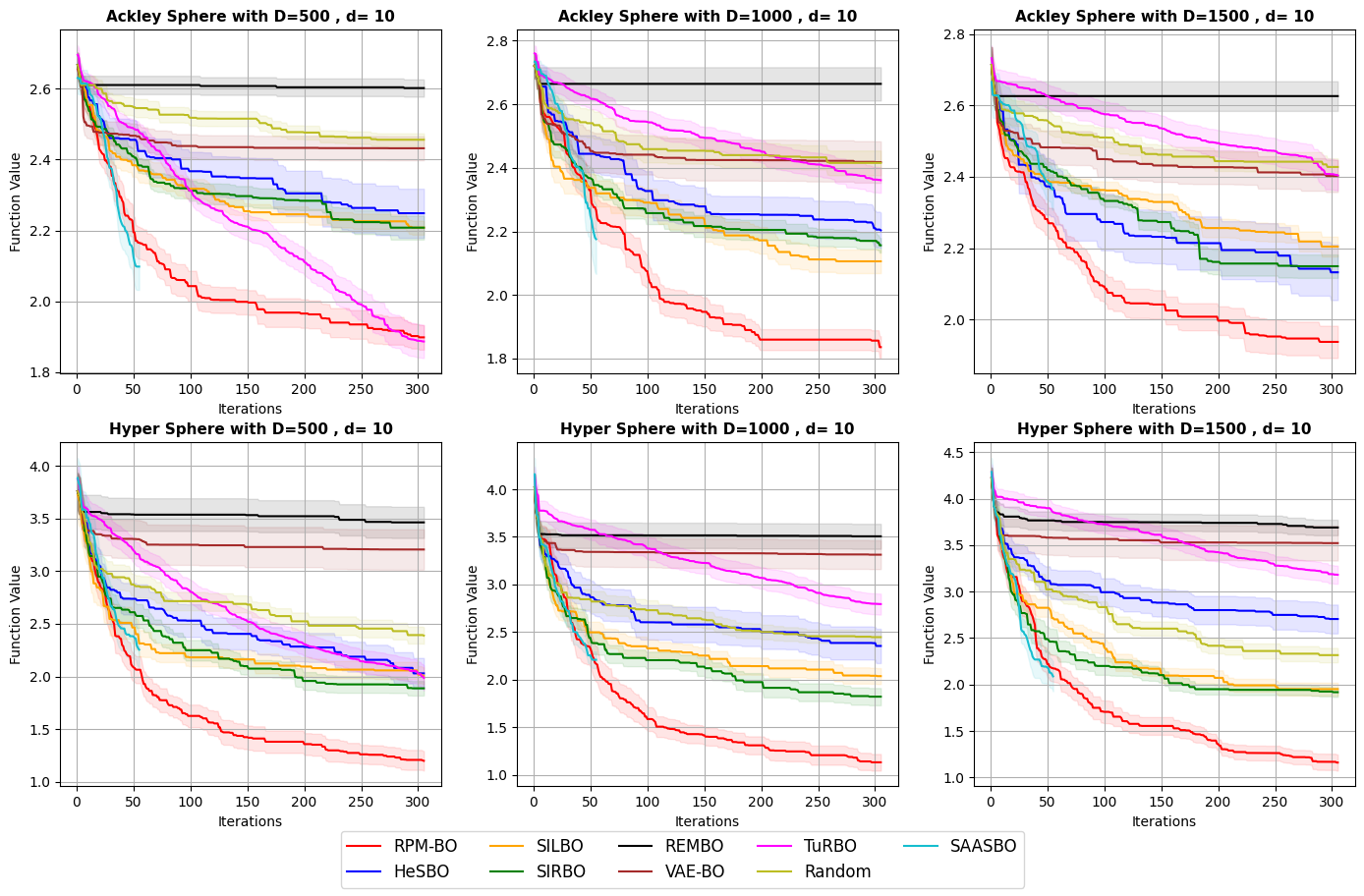}
    \caption{Performances on two standard functions with effective spherical manifold for 500, 1000, and 1500 input dimensions. For all cases, the dimension of the effective manifold is 10. The $y-$axis presents the value function (A smaller value is better).}
    \label{fig:sphere}
\end{figure}
\begin{figure}[h!]
    \centering
    \includegraphics[width=1.0\linewidth]{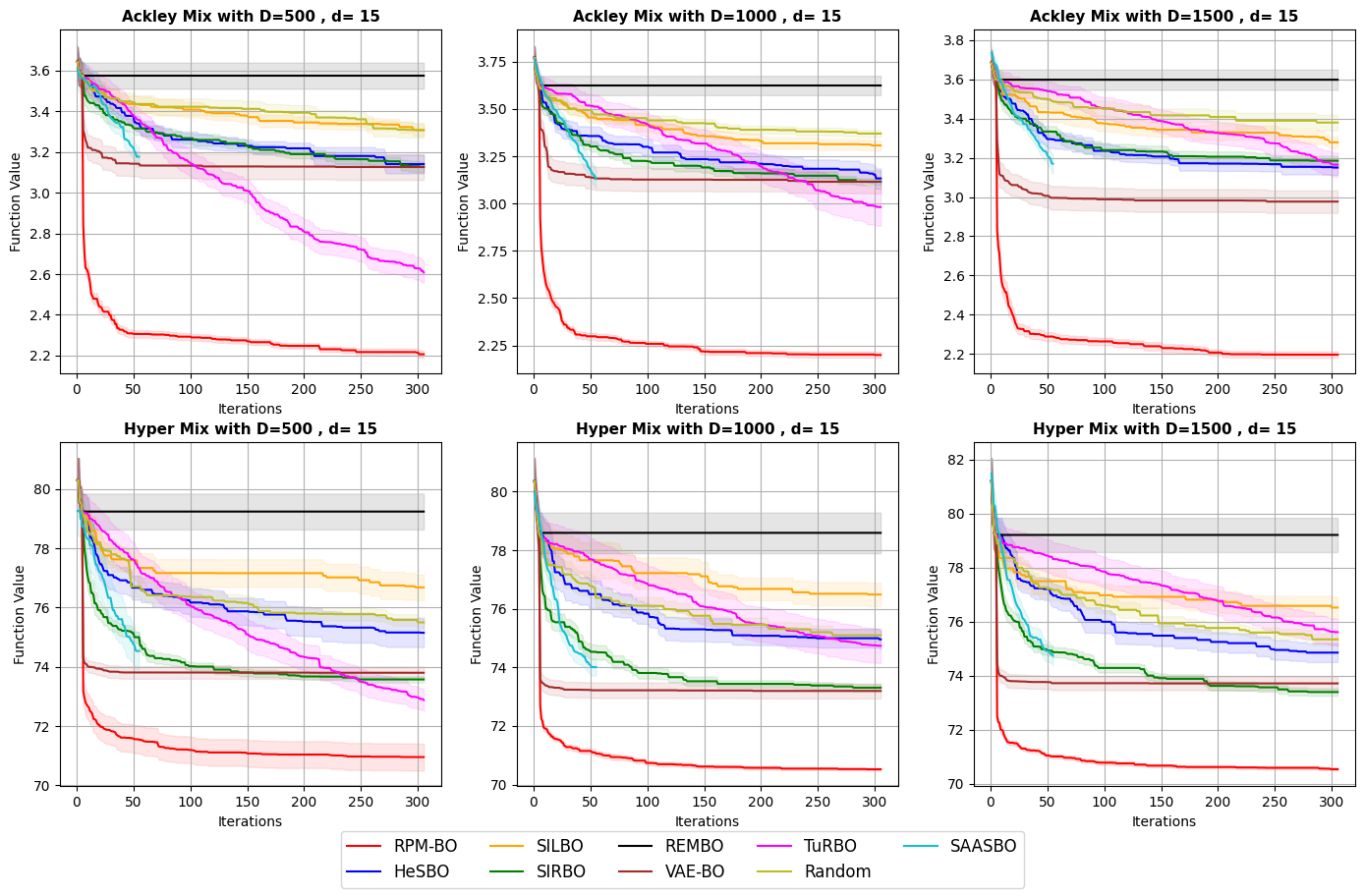}
    \caption{Performances on two standard functions with effective mixed-manifold for 500, 1000, and 1500 input dimensions. For all cases, the dimension of the effective manifold is 15. The $y-$axis presents the value function (A smaller value is better).}
    \label{fig:mixed}
\end{figure}
\begin{figure}[h!]
    \centering
    \includegraphics[width=1.0\linewidth]{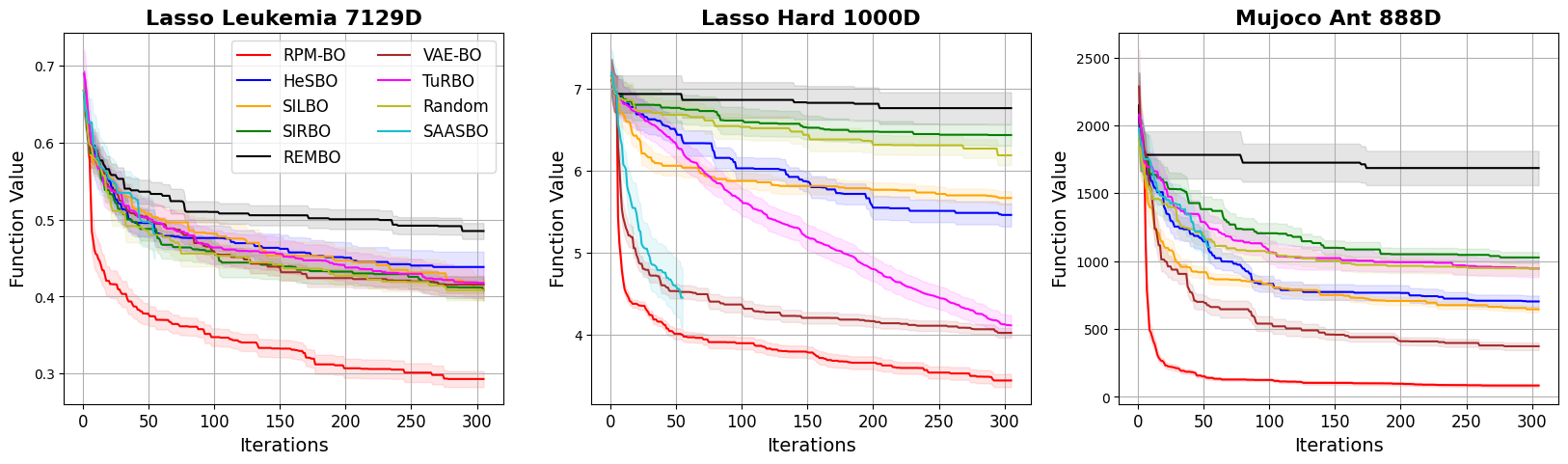}
    \caption{Performance on three real applications.}
    \label{fig:real}
\end{figure}
In this section, we evaluate the performance of our proposed method on a variety of synthetic functions that have different types of effective manifold subspace and on real-world problems. We compare our algorithm against eight baselines: REMBO \cite{Wang2016BayesianOI}; HeSBO \cite{Nayebi2019AFF}; SIRBO \cite{ijcai2019p596}; SILBO \cite{chen2020semi} ; VAE-BO \cite{Notin2021ImprovingBO}; TuRBO \cite{NEURIPS2019_6c990b7a}; random search method \cite{JMLR:v13:bergstra12a} and SAASBO \cite{Eriksson2021HighDimensionalBO}.  We use Expected Improvement as an acquisition function. For more detail about the settings, we refer the reader to section C in the Appendix.
\footnote{The code of RPM-BO is available at \url{https://github.com/Fsoft-AIC/RPM-BO}}
\subsection{Synthetic Functions} 
\subsubsection{Constructing function $f$ via the latent manifold subspace} \label{sect:def_man}
To construct a function $f$ having the effective low-dimensional manifold subspace, we start by constructing a manifold subspace. Unlike previous works using a linear manifold, we here construct a more complicated manifold. We use two types of manifolds. The first type has spherical geometry in section \ref{subsec: learning_h}, where we formulate the objective function shown in Equation (\ref{eqa:sphere}). For the second type, we will construct a more complicated manifold which is a mixture of a linear and a torus. Specifically, we define the objective function as:
\begin{equation} \label{eqa:mix}
    f(x_1,x_2,...,x_D) = g(z_1,z_2,...,z_{2d_1 + d_2})
\end{equation}
where 
$$
 \left\{\begin{array}{c}
z_{2i-1} = \frac{x_{2i-1}}{\sqrt{x_{2i-1}^2 + x_{2i}^2}} \quad \forall i=\overline{1,d_1}\\
z_{2i} = \frac{x_{2i}}{\sqrt{x_{2i-1}^2 + x_{2i}^2}}  \quad \forall i=\overline{1,d_1}\\
z_i = x_i \quad \forall i=\overline{(d_1 + 1),d_2} \\
\end{array}\right.
$$
The target function $f$ depends only on the first $(2d_1 + d_2)$ dimensions. However, the first $2d_1$ elements lie on the $d_1$-dimensional flat torus $\mathcal{T}^{2d_1}$ whereas the last $d_2$ elements lie on linear subspace $\mathbb{R}^{d_2}$. 
Therefore, the function $f$ has an effective manifold represented by $\mathbf{M} = \mathcal{T}^{2d_1} \times \mathbb{R}^{d_2}$, which is the mix structure between a flat torus and a linear manifold. We call $\mathbf{M}$ "mixed-manifold". The dimension of $\mathbf{M}$ is $d_1 + d_2 =d$ so we can write $\mathbf{M}^d$. The visualization of $\mathbf{M}^2$ is shown in Figure \ref{fig:manifold} (Right).
Due to the space limitation, we will add a linear $d-$dimensional manifold in Appendix D.5. For the feature mapping, we define the formulation of $h$ for the sphere in Equation (\ref{eqa:h_sphere}) while we construct a NN mapping for mixed manifold.

To build the above functions $f$, we use the following functions: Ackley and Rotated Hyper-Ellipsoid. Despite the initial input dimension being high, the effective latent manifold has a significantly lower dimension. This characteristic enables the convergence rate to match that of BO in a lower-dimensional space. Therefore, we ran each experiment 20 times with a total of 300 iterations and 10 initial points. As reported in \cite{Eriksson2021HighDimensionalBO}, SAASBO requires very high runtime and very large memory, thus we run SAASBO only within 60 iterations. Please see the runtime experiment in Appendix D.3.

\subsubsection{On the scalability of \textit{D}} \label{sect:5.1}
We evaluate the scalability of the proposed RPM-BO algorithm on three different values of dimension $D \in \{500,1000,1500\}$. We perform experiments on a variety of test functions with two types of latent manifold structure as discussed in \ref{sect:def_man}. For the effective manifold subspace $\mathcal{S}^d$, we set $d = 10$ and the projection dimension for all the other baselines is 10. On the other hand, for manifold subspace $\mathbf{M}^d$, we set $d_1 = 5, d_2 = 10$ ($d=15$) and the projection dimension is 15 for the other baselines. 


The result is shown in Figure \ref{fig:sphere} and Figure \ref{fig:mixed}. Note that we do not know the exact actual minimum, thus we will show function values instead. 
Overall, our method performs better than other baselines and is scalable to high dimensions. In contrast, TuRBO necessitates exploration in the high-dimensional space to reach the optimum, leading to a requirement for a larger number of iterations.
On the other hand, we can see that REMBO performs poorly in most cases. 
Additionally, both $\mathbf{M}^d$ and $\mathcal{S}^d$ depend on some sparse axis-aligned dimensions, which explains why SAASBO does well in some experiments such as the Hyper Sphere function with $D=500$. By heuristically learning the embedded effective manifold and feature mapping $h$, our method overcomes the limitation of the linear projection method and approximates the surrogate function in low-dimensional space via a compressed Gaussian Process. Although the VAE-BO baseline can be applied to the objective that has nonlinear embedded features, it requires a significant quantity of data to learn an offline meaningful reconstruction. However, the result of VAE-BO is not compared well with our method, which can learn the intrinsic subspace during the optimization. We further show the p-values to test the performance of our method in the Appendix D.1.

\subsection{Optimization on a real-world problem}
To further explore the performance of our method, we consider three real-world problems, including Lasso Hard, Lasso Leukemia benchmarks from LASSOBENCH \cite{https://doi.org/10.48550/arxiv.2111.02790} and MuJoCo Ant benchmark \cite{6386109}. 
\paragraph{LassoBench Problems}
LassoBench \cite{https://doi.org/10.48550/arxiv.2111.02790} is a high-dimensional benchmark for hyper-parameters optimization of Weighted Lasso Regression models. For Lasso Hard experiments, the dataset is drawn from a 1000-dimensional multivariate normal distribution with zero mean and unit variance. To study the robustness of observational noise, we test on a noisy variant of Lasso Hard. On the other hand, the Leukemia dataset includes 7129 gene expression values from 72 samples for predicting the type of Leukemia, which leads to a 7129-dimensional black-box optimization problem. Moreover, Lasso Hard and Lasso Leukemia pose 50, and 22 effective dimensionalities, respectively. Hence, for RPM-BO, we construct mapping $h$ shown in Equation (\ref{eqa:h_linear}).  
Figure \ref{fig:real} shows the validation loss for all methods. As seen in the figure, RPM-BO clearly outperforms the baselines since it can exploit the effective linear manifold subspace. Since the experiments pose sparse axis-aligned effective dimension, SAASBO can perform well on both Lasso Hard and Lasso Leukemia.
\paragraph{MuJoCo Locomotion Tasks}
We turn to the more difficult MuJoCo tasks in Reinforcement Learning. The goal is to find the parameters of a linear policy maximizing the accumulated reward. We set the objective function $f$ to be the negative of accumulated reward. The intuition for an effective latent manifold is that optimizing a subset of hinges, such as the 4 hinges connecting the leg links, while keeping the rest fixed, can allow the robot to move forward. We set the projection dimension for HeSBO, SIRBO, SILBO, REMBO is 10. For RPM-BO, since we do not know the geometry of latent space, the mapping $h$ is designed by a neural network from $\mathbb{R}^D \rightarrow \mathbb{R}^D$ 
with $m=10$. The result is shown in Figure \ref{fig:real}. As we can see, our method achieves the best performance followed by VAE-BO. This shows the effectiveness of our method.
\section{Related work}
\paragraph{HD-BO with additive structure assumption} states that the objective function is decomposed into the sum of independent low-dimensional functions. Given the aforementioned structure, \cite{Kandasamy2015HighDB} provides an alternative acquisition function for additive kernels called ADD-GP-UCB which can be optimized over separately disjoint low-dimensional subspaces. \cite{Li2016HighDB} consider the additive model on the projected data that generalizes the additive assumption to the projected additive assumption. \cite{Wang2017BatchedHB} offer efficient methods for learning decomposition based on Markov Chain Monte Carlo (MCMC) methods. \cite{pmlr-v84-rolland18a} modify the additive model with disjoint decomposition to have overlapping groups, where the overlapping decomposition is represented by a dependency graph and the acquisition function is optimized using message passing protocol. Recent work shows that learning decompositions based on data can be easily misled towards local decompositions \cite{Ziomek2023}. Therefore, they proposed RDUCB, a data-independent approach to learning the decomposition using a random tree sampler with theory support. As opposed to all  those techniques, our method takes either a different assumption or direction to solve the HD-BO problem.

\paragraph{HD-BO with effective dimensionality assumption} \cite{ijcai2019p596} introduce the supervised dimension reduction method SIR to automatically learn the intrinsic structure of the objective function. However, this approach requires a substantial amount of initial data for learning the dimension reduction projection. To avoid this issue, \cite{chen2020semi} leverages a semi-supervised method to iteratively learn the projection matrix. Nevertheless, a limitation of linear embedding methods is the selection of the embedded dimension. \cite{45016f41f3844a1087845a2b542f5da9} propose using nested random subspaces in combination with the trust region method to choose the projection dimensions. It is important to note that previous works assuming effective dimensionality primarily focus on linear relationships between dimensions. Our approach can extend to capturing non-linear relationships among these dimensions.

\paragraph{HD-BO without structural assumptions} Scaling BO to high-dimensional without structural assumptions has been also investigated. To address this, \cite{Li2017HighDB} propose a dropout strategy where BO is applied to a randomly chosen subset of $d$ dimensions out of $D$ dimensions in each iteration. \cite{TranThe2020TradingCR} utilizes a discrete set of low-dimensional subspaces to optimize the acquisition function at a lower cost. To handle high-dimensional problems, the use of a trust region has shown promising results in reducing the search space size \cite{NEURIPS2019_6c990b7a}, although it may limit the exploration to a more local scale. \cite{Kirschner2019AdaptiveAS} addresses the challenge of optimizing non-convex acquisition functions in high dimensions by decomposing the global problem into a sequence of one-dimensional sub-problems, which can be efficiently solved. \cite{Dai2022} argue that using the GP as a surrogate model can limit the performance of BO on high-dimensional problems. They proposed using highly expressive neural networks as surrogate models followed by Thompson Sampling acquisition function, which avoids the need to invert the large matrix. Despite their capability to handle any type of objective function, these approaches must strike a balance between computational complexity and convergence rate in order to effectively solve the problem.

\paragraph{Manifold hypothesis in HD-BO}
Manifold is one of the most prominent examples of a non-Euclidean datatype involving a vast number of geometric surfaces such as curvy or other diverse shapes. Motivated by applications areas such as robotics \cite{Calinon2019GaussiansOR}, recent works have sought to generalize Gaussian Process and BO to the manifold setting \cite{Hutchinson2021VectorvaluedGP,Borovitskiy2020MaternGP}. \cite{inproceedings} propose Manifold Gaussian Processes (mGP), a novel supervised method that jointly learns a transformation of the data into a feature space and a Gaussian Process (GP) regression from the feature space. The mGP allows learning data representations, which are useful for the overall regression task. 
When the data is on a low-dimensional manifold, \cite{JMLR:v17:14-230} proposes a compressed Gaussian Process which uses a random projection matrix to construct a regression on lower dimension space.
\cite{Jaquier2019BayesianOM} propose GABO which use the geometry-aware kernels for building surrogate model when data lie on a Riemannian manifold.  To build a fully geometry-aware BO framework, they thereby optimize acquisition functions based on manifold optimization techniques. Although the result outperforms the classical BO in accuracy, GABO only works for low-dimensional input. \cite{NEURIPS2020_f05da679} scale up GABO to high-dimensional space called HD-GABO. HD-GABO features geometry-aware GP that jointly learns a nest-manifold embedding and a representation of an objective function in the latent space. However, HD-GABO requires that the latent space inherit the geometry of the original space, which limits the application of the method. Our approach can work for a weaker assumption, which does not require the inheritance of latent space. If the geometry of manifold latent space is available, it becomes possible to estimate the latent space and conduct optimizations within that space. Nevertheless, using manifold approximation techniques, as demonstrated in the works of \cite{MrGap,Sober2019ManifoldAB,Niyogi2008FindingTH},  requires a large amount of data as well as some strict conditions about its density to achieve accuracy. This contradicts the data-efficient perspective of BO.

\section{Conclusion}
In this paper, we proposed an efficient method for HD-BO with the assumption that the objective function $f$ has an effective low-dimensional manifold subspace. Our method with theoretical support uses a random linear projection combined with a surrogate model representation in the latent space, achieving the optimal posterior contraction rate. Additionally, our method optimizes BO’s acquisition function efficiently in the low-dimensional space while maintaining the convergence rate of the surrogate model. In contrast with other HD-BO methods, our approach can either efficiently exploit the geometry of the manifold whenever it is available or mitigate the overfitting phenomenon via semi-supervised learning. Empirically, our algorithm outperformed other high-dimensional BO baselines.

\authorrunning{Nguyen et al.}





\appendix 
\onecolumn 
\centerline{ \textbf{\huge Supplementary Material}}

\section{Proofs of Theorem 1} \label{appendix:proof_theo2}

\begin{lemma} \label{lemma:rank}
    Let $\mathbf{O} \in \mathbb{R}^{n \times k}$ is the orthogonal matrix (i.e $\mathbf{O}^T\mathbf{O}=\mathbb{I}_{k \times k}$). Let $\mathbf{A} \in \mathbb{R}^{n \times s}$ be the random orthogonal matrix. Suppose $k \leq s < n$. Then with probability 1, $\operatorname{rank}(\mathbf{O}^T\mathbf{A}) = k$.
\end{lemma}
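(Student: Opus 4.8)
The plan is to show that $\mathbf{O}^T\mathbf{A}$, a $k\times s$ matrix with $k\le s$, has full row rank $k$ almost surely; since the rank of a $k\times s$ matrix is automatically at most $k$, this is the only thing to establish. The cleanest route is to transfer the problem to a Gaussian matrix. First I would recall that a random (Haar-distributed) orthonormal $n\times s$ frame $\mathbf{A}$ has the same distribution as $\mathbf{G}(\mathbf{G}^T\mathbf{G})^{-1/2}$, where $\mathbf{G}\in\mathbb{R}^{n\times s}$ has i.i.d.\ $N(0,1)$ entries (the Gram matrix $\mathbf{G}^T\mathbf{G}$ is invertible with probability $1$ because $s<n$). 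As $(\mathbf{G}^T\mathbf{G})^{-1/2}$ is invertible, $\operatorname{rank}(\mathbf{O}^T\mathbf{A})$ and $\operatorname{rank}(\mathbf{O}^T\mathbf{G})$ have the same distribution, so it suffices to analyze $\mathbf{O}^T\mathbf{G}$.

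Next I would identify the law of $\mathbf{O}^T\mathbf{G}$. Writing $\mathbf{G}=[g_1,\dots,g_s]$ with independent columns $g_j\sim N(0,I_n)$ and using $\mathbf{O}^T\mathbf{O}=I_k$, each column satisfies $\mathbf{O}^T g_j\sim N(0,\mathbf{O}^T\mathbf{O})=N(0,I_k)$, and the columns remain independent. Hence $\mathbf{O}^T\mathbf{G}$ is a $k\times s$ matrix with i.i.d.\ standard Gaussian entries. It then remains to prove the elementary fact that a $k\times s$ standard Gaussian matrix with $k\le s$ has rank $k$ with probability $1$. I would do this by induction on the rows: conditioned on the first $j<k$ rows spanning a $j$-dimensional subspace of $\mathbb{R}^s$, the next row is an absolutely continuous vector in $\mathbb{R}^s$ and lands in that fixed $j$-dimensional (hence Lebesgue-null) subspace with probability $0$; a union bound over the finitely many steps closes the argument. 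Combining the three reductions yields $\operatorname{rank}(\mathbf{O}^T\mathbf{A})=k$ almost surely.

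The main obstacle, and the step to state carefully, is the first reduction: I must justify that the paper's notion of a random orthogonal matrix is (or can be coupled to) the Gaussian--orthonormalization construction, and that the event $\{\operatorname{rank}(\mathbf{O}^T\mathbf{A})=k\}$ depends only on the distribution of $\mathbf{A}$. A self-contained alternative that avoids any coupling is to argue directly on the Stiefel manifold $V_s(\mathbb{R}^n)$: the map $\mathbf{A}\mapsto \det(\mathbf{O}^T\mathbf{A}\mathbf{A}^T\mathbf{O})$ is the restriction of a polynomial, one has $\operatorname{rank}(\mathbf{O}^T\mathbf{A})=k$ iff this quantity is nonzero, and it is not identically zero, since the frame whose first $k$ columns are those of $\mathbf{O}$ (completed by orthonormal vectors in $\operatorname{col}(\mathbf{O})^\perp$, which is possible as $s<n$) gives $\mathbf{O}^T\mathbf{A}=[\,I_k\ \ 0\,]$. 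A nonzero real-analytic function on the connected manifold $V_s(\mathbb{R}^n)$ (connected precisely because $s<n$) vanishes only on a set of Haar measure zero, which gives the claim. Either way the real content is the standard ``generic matrices have full rank'' phenomenon; the Gaussian reduction simply packages it most transparently.
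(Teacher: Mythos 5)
Your proposal is correct and follows essentially the same route as the paper: both represent the Haar-distributed frame as $\mathbf{A}=\mathbf{G}(\mathbf{G}^T\mathbf{G})^{-1/2}$ with $\mathbf{G}$ i.i.d.\ Gaussian, note that the invertible factor $(\mathbf{G}^T\mathbf{G})^{-1/2}$ preserves rank, and reduce to the full-rank-almost-surely property of $\mathbf{O}^T\mathbf{G}$. The only difference is that you prove this last step directly (via the observation that $\mathbf{O}^T\mathbf{G}$ is itself an i.i.d.\ Gaussian $k\times s$ matrix, plus a row-by-row null-set argument), whereas the paper cites Lemma~2 of Wang et al.\ for it; your version is self-contained but not a different argument in substance.
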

\begin{proof}
    Since $\mathbf{A} \in \mathbb{R}^{n \times s}$ is the random orthogonal matrix, $\mathbf{A}$ uniformly distributed on the Stiefel manifold $\mathbb{V}_s(\mathbb{R}^n)$. Therefore, followed by Theorem 2.2.1 in \cite{Chikuse2003StatisticsOS}, $\mathbf{A}$ can be expressed as:
    \begin{equation}
        \mathbf{A} = Z(Z^TZ)^{-1/2}
    \end{equation}
    where $Z \in \mathbb{R}^{n \times s}$ is the random matrix where elements of $Z$ are i.i.d from normal distribution $\mathcal{N}(0,1)$.\\
    We have:
    \begin{align*}
        \mathbf{O}^T\mathbf{A} = \mathbf{O}^TZ(Z^TZ)^{-1/2}
    \end{align*}
    Since $(Z^TZ)^{-1/2} \in \mathbb{R}^{m \times m}$ is invertible, $\operatorname{rank}(\mathbf{O}^TZ(Z^TZ)^{-1/2}) = \operatorname{rank}(\mathbf{O}^TZ)$\\
    Moreover, followed by Lemma 2 in \cite{Wang2016BayesianOI}, with probability 1, $\operatorname{rank}(\mathbf{O}^TZ) = k.$
\end{proof}

\textbf{Given Lemma \ref{lemma:rank}, we now give a full proof of Theorem 1 in the main paper.}

\begin{proof}
    We have $\mathcal{M}$ is the $d-$dimensional manifold embedded in $\mathbb{R}^D$. According to the assumption, $\mathcal{M}$ can be embedded in Euclidean space $\mathbb{R}^m$. Therefore, $\exists$ a $m-$dimensional affine space $\mathcal{L} \subset \mathbb{R}^D$ such that $\mathcal{M} \subset \mathcal{L}$. Let the orthogonal basis of $\mathcal{L}$ is $\{b_i\}_{i=1}^m$. Defined $\Sigma = \begin{bmatrix}
 b_{1} &  b_{2} & \cdots & b_m \\
 \end{bmatrix} \in \mathbb{R}^{D \times m}$. Moreover, let $\alpha \in \mathbb{R}^D$ is the origin of $\mathcal{L}$.\\
    Let arbitrary point $x^* \in \mathcal{M} \subset \mathcal{L} \subset \mathbb{R}^D$. 
 Since $x^* \in \mathcal{L}$, there exists $q \in \mathbb{R}^m: x^* = \Sigma q + \alpha.$ With probability 1, $\Sigma^T\mathbf{A}^T$ has rank $m$  according to Lemma \ref{lemma:rank}. Therefore, with probability 1, $\exists z \in \mathbb{R}^m: \Sigma^T\mathbf{A}^Tz = q + \Sigma^T\alpha.$ Project $\mathbf{A}^Tz$ to $\mathcal{L}$ we have:
 \begin{align*}
     P_{\mathcal{L}}(\mathbf{A}^Tz) &= \Sigma\Sigma^T(\mathbf{A}^Tz-\alpha) + \alpha\\
     &=\Sigma\Sigma^T\mathbf{A}^Tz - \Sigma\Sigma^T\alpha + \alpha \\
     &= \Sigma (q + \Sigma^T\alpha) - \Sigma\Sigma^T\alpha + \alpha \\
     &= \Sigma q + \Sigma\Sigma^T\alpha - \Sigma\Sigma^T\alpha + \alpha \\
     &= \Sigma q + \alpha \\
     & = x^*
 \end{align*}
Denote $\mathcal{S}_{\mathcal{M}}(\mathbf{A}^Tz) = \{s \in \mathbb{R}^D\mid s \in \operatorname{argmin}_{m \in \mathcal{M}}||m-\mathbf{A}^Tz||\}$. Let arbitrary point $x_{\mathcal{M}} \in \mathcal{S}_{\mathcal{M}}(\mathbf{A}^Tz) \subset \mathcal{M} \subset \mathcal{L}$. 

We have $(\mathbf{A}^Tz - x^*) \bot (x^* - x_{\mathcal{L}})$ for all $x_{\mathcal{L}} \in \mathcal{L}$ since $x^*$ is the orthogonal projection of $\mathbf{A}^Tz$ to affine space $\mathcal{L}$. Therefore, $(\mathbf{A}^Tz - x^*) \bot (x^* - x_{\mathcal{M}})$. Consequently, we have:
 \begin{align*}
     \|\mathbf{A}^Tz - x_{\mathcal{M}}\|^2 &= \|\mathbf{A}^Tz - x^*\|^2 + \|x_{\mathcal{M}} - x^*\|^2\\
     & \geq \|\mathbf{A}^Tz - x^*\|^2 
 \end{align*}
 
$\Rightarrow x^* \in \mathcal{S}_{\mathcal{M}}(\mathbf{A}^Tz)$ since $x^* \in \mathcal{M}$.
 
  Next, we will prove that there is a unique point in the set $\mathcal{S}_{\mathcal{M}}(\mathbf{A}^Tz)$. Indeed, suppose there exists $x' \in \mathcal{S}_{\mathcal{M}}(\mathbf{A}^Tz)$ and $x' \neq x^*$. Since $x' \in \mathcal{L}, (\mathbf{A}^Tz - x^*) \bot (x^* - x')$. Therefore:
 \begin{align*}
     & ||\mathbf{A}^Tz - x'||^2 = ||\mathbf{A}^Tz-x^*||^2 + ||x^*-x'||^2 \\
     &\Rightarrow ||x^* - x'|| = 0 \text{ since } x^*, x' \in \mathcal{S}_{\mathcal{M}}(\mathbf{A}^Tz).
 \end{align*}
 $\Rightarrow$ This contradicts with the assumption that $x' \neq x^*$.
\end{proof}

\section{Discussion about diffeomorphism and Random projection} \label{appendix:diffeomorphism}
\subsection{Discussion on the diffeomorphism and the random projection} \label{appendix:diffeomorphism_small}
In this subsection, we will discuss further the diffeomorphism as we mentioned in Section 3.2 in the main paper, and prove two important results (1) why the random projection using a random matrix $A$ is a diffeomorphism onto its image; and (2) the condition that the projection $\Phi$ on the manifold is a diffeomorphism is enough to guarantee the convergence for the regression of the function $g$. 

\paragraph{The random projection using a random matrix $A$ is a diffeomorphism onto its image.}
Firstly, we state the definition of the diffeomorphism as follows.
\begin{definition}
Given two manifolds $\mathcal{M}$ and $\mathcal{N}$, a differentiable map $f: \mathcal{M} \rightarrow \mathcal{N}$ is called a \textbf{diffeomorphism} if it is a bijection and its inverse $f^{-1}: \mathcal{N} \rightarrow \mathcal{M}$ is differentiable as well. If these functions are $r$ times continuously differentiable, $f$ is called a \textbf{$C^r$-diffeomorphism}.

Two manifolds $\mathcal{M}$ and $\mathcal{N}$ are \textbf{diffeomorphic} if there is a diffeomorphism $f$ from $\mathcal{M}$ to $\mathcal{N}$. They are $C^r$-diffeomorphic if there is an $r$ times continuously differentiable bijective map between them whose inverse is also $r$ times continuously differentiable.
\end{definition}

Secondly, using a random matrix, we achieve the following important result: 
\begin{theorem} \label{theo:preserve}
\cite{Baraniuk2009RandomPO}
Let $\mathcal{M}$ be a compact $d-$dimensional manifold in $\mathbb{R}^D$ having volume $V$ and condition number $1/\tau$. Fix $0<\epsilon<1$ and $0<\rho<1$. Let $\mathbf{A}$ be a random orthogonal projection from $\mathbb{R}^D$ to $\mathbb{R}^m$ and 
\begin{equation} \label{condition}
    m = \mathcal{O}\left(\frac{dlog(DV\tau^{-1})log(\rho^{-1})}{\epsilon^2}\right)
\end{equation}
Suppose $m<D$. Then, with probability exceeding $1-\rho$, the following statement holds: For every points $x,y\in \mathcal{M}$:
\begin{equation} \label{eqa:preserve_appendix}
    (1-\epsilon)\sqrt{\frac{m}{D}} \leq \frac{d_2(\mathbf{A}x,\mathbf{A}y)}{d_2(x,y)} \leq (1+\epsilon)\sqrt{\frac{m}{D}}
\end{equation}
where $d_2(.,.)$ stands for Euclidean distance.
\end{theorem}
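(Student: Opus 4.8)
The plan is to establish (\ref{eqa:preserve_appendix}) as a manifold Johnson--Lindenstrauss statement via the standard \emph{concentration-plus-covering} strategy. The backbone is the pointwise JL concentration inequality: for any \emph{fixed} $v \in \mathbb{R}^D$, a random orthogonal projection satisfies $\big|\,\|\mathbf{A}v\|_2/\|v\|_2 - \sqrt{m/D}\,\big| \le \epsilon\sqrt{m/D}$ except with probability exponentially small in $m$. The whole argument reduces to applying this bound to a carefully chosen finite set of secant directions and then extending to all pairs by continuity, with the curvature control supplied by the condition number $1/\tau$ doing the extension.

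First I would reduce the two-point statement to one about unit secant vectors $(x-y)/\|x-y\|_2$, so that only directions matter. I would then split the pairs $x,y \in \mathcal{M}$ into two regimes set by the scale $\tau$: a \emph{near} regime, in which $x$ and $y$ are close enough that the secant direction is well approximated by a tangent vector at a nearby base point, and a \emph{far} regime, in which $\|x-y\|_2$ is bounded below by a fixed fraction of $\tau$.

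The covering step is where the geometric hypotheses enter. Using the volume $V$ and the condition number $1/\tau$ (equivalently a lower bound on the reach of $\mathcal{M}$), I would bound the covering number of $\mathcal{M}$ at resolution proportional to $\epsilon\tau$, producing a net of base points whose cardinality is controlled by $V/\tau^d$ up to constants, and at each base point I would also net the unit sphere of the $d$-dimensional tangent space. The resulting finite collection of directions has log-cardinality of order $d\log(DV\tau^{-1})$, which is precisely what makes the bound on $m$ in (\ref{condition}) sufficient after a union bound over these directions, with the extra $\log(\rho^{-1})$ factor absorbing the target failure probability $\rho$. Applying pointwise JL to every netted direction and union-bounding then yields, with probability at least $1-\rho$, simultaneous norm preservation over the entire net.

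The extension from the net to all pairs is the main obstacle, and it is where the condition number does the real work. In the near regime I would invoke the reach bound to control how far the true secant direction can tilt away from the tangent plane at the nearest base point: curvature estimates of the form "the chord deviates from the tangent plane by at most a quadratic term in the separation, scaled by $\tau^{-1}$" let me transfer the norm preservation from the closest netted tangent direction to the actual secant, at the cost of enlarging $\epsilon$ by a controlled constant. In the far regime a Lipschitz/triangle-inequality comparison of $x,y$ to their nearest net points, combined with the lower bound on $\|x-y\|_2$, suffices. The delicate bookkeeping I expect to be hardest is reconciling the two regimes at the crossover scale and propagating the enlargements of $\epsilon$ consistently, so that a single choice of $m$ of the stated order, and a single aggregate failure probability below $\rho$, work uniformly for all pairs.
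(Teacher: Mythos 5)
The paper offers no proof of this theorem at all --- it is quoted directly, citation and hypotheses included, from \cite{Baraniuk2009RandomPO} --- so the only meaningful comparison is with the argument in that reference, and your sketch reproduces it faithfully: pointwise Johnson--Lindenstrauss concentration for a fixed direction, a net of base points on $\mathcal{M}$ whose cardinality is controlled by $V$ and $\tau$, nets on the unit spheres of the tangent spaces, a union bound over all netted directions, and a reach-controlled extension from netted tangent directions to arbitrary secants split into near and far regimes at a scale set by $\tau$. The one ingredient you assert rather than derive is why $D$ appears inside the logarithm in (\ref{condition}): it enters because an orthogonal projection contracts norms by the factor $\sqrt{m/D}$, so an additive net error of size $\delta$ in $\mathbb{R}^D$ is only negligible after projection if $\delta \lesssim \epsilon\sqrt{m/D}$, which forces the nets to be finer by a factor on the order of $\sqrt{D}$ and contributes the $\log D$ term to their log-cardinality; apart from that bookkeeping detail your plan is the standard and correct route to the result.
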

Theorem \ref{theo:preserve} resembles the Johnson-Lindenstrauss Lemma \cite{Gupta1999AnEP} but it is generalized to the manifold space and especially,  instead of being determined by the number of data points, the number of required dimensions for a structure-preserving mapping should be determined by the characteristics of the manifold itself. Moreover, as stated by \cite{JMLR:v17:14-230}, Equation (\ref{eqa:preserve_appendix}) implies that with high probability, the orthogonal projection matrix $\mathbf{A}$ is a diffeomorphism onto its image. Therefore, we can construct the diffeomorphism dimensionality reduction map $\Phi$ by the simple random orthogonal projection matrix $\mathbf{A}$ with suitable projected dimension $m$.

\paragraph{The regression convergence of function $g$.}
Learning directly the high-dimensional function $g: \mathcal{M} \rightarrow \mathbb{R}$ is expensive. Instead, as suggested by \cite{10.1214/15-AOS1390}, we will learn the low-dimensional GP $g_0$ with the diffeomorphism dimensionality reduction method. We construct a Gaussian Process in low-dimensional space $g_0 \sim \mathcal{GP}(\mu, k_m)$ with mean function $\mu: \Phi(\mathcal{M}) \subset \mathbb{R}^m \rightarrow \mathbb{R}$ and a positive-definite covariance function $k_a: \Phi(\mathcal{M}) \times \Phi(\mathcal{M}) \rightarrow \mathbb{R}$ defined as $k_a(x,y) = \exp\{-a^2 ||x-y||^2\}$. In addition, we specify the prior of the lengthscale parameter $a$ as $a^d \sim Ga(a_0,b_0)$ where $Ga(.,.)$ is the gamma distribution. We set the training data for Gaussian Process $g_0$ as $\{\Phi(x_i), y_i = g(x_i)+\epsilon_i\}_{i=1}^n$. The convergence of $g_0$ can be guaranteed in the following theorem.

\begin{theorem} \label{theo:contraction}
\cite{10.1214/15-AOS1390}
Assumes that $\mathcal{M}$ is a $d-$dimensional compact $C^{\gamma_1}$ submanifold of $\mathbb{R}^D$. Suppose that $\Phi: \mathbb{R}^D \rightarrow \mathbb{R}^m$ is a dimenionality reduction map such that $\Phi$ restricted on $\mathcal{M}$ is a $C^{\gamma_2}$-diffeomorphism. Then, for any $g\in C^s(\mathcal{M})$ with $s \leq \min\{2, \gamma_1 -1, \gamma_2 - 1\}$, the posterior contraction rate w.r.t $||.||_n$ of $g_0$ will be at least $\epsilon_n = n^{-s/(2s+d)}(logn)^{d+1}$.
\end{theorem}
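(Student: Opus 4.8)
The plan is to recognize this statement as an instance of the manifold Gaussian-process regression theory of Yang and Dunson, transported through the diffeomorphism $\Phi$, and then to verify the standard posterior-contraction conditions for a rescaled squared-exponential prior. First I would reduce to a regression problem living on the lower manifold. Writing $z = \Phi(x)$ and $\tilde g = g \circ (\Phi|_{\mathcal M})^{-1}:\Phi(\mathcal M)\to\mathbb R$, the process $g_0$ is being fit to data $\{(\Phi(x_i),y_i)\}_{i=1}^n$ with $y_i = \tilde g(\Phi(x_i)) + \epsilon_i$. Because $\Phi|_{\mathcal M}$ is a $C^{\gamma_2}$-diffeomorphism and $s \le \gamma_2 - 1$, the pushed-forward target $\tilde g$ lies in $C^s(\Phi(\mathcal M))$, and $\Phi(\mathcal M)$ is again a compact $d$-dimensional submanifold of $\mathbb R^m$ with bounded curvature and positive reach inherited from the $C^{\gamma_1}$ manifold $\mathcal M$. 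This converts the claim into the following: the posterior of the rescaled SE prior contracts at rate $n^{-s/(2s+d)}(\log n)^{d+1}$ around an $s$-smooth function supported on a $d$-dimensional submanifold of $\mathbb R^m$.

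Next I would invoke the general posterior-contraction theorem of Ghosal and van der Vaart, specialized to Gaussian priors by van der Vaart and van Zanten. For the empirical norm $\|\cdot\|_n$ attached to the design, it suffices to produce $\epsilon_n\to 0$ with $n\epsilon_n^2\to\infty$ satisfying (a) a small-ball lower bound $P(\|W-\tilde g\|_n \le \epsilon_n)\ge e^{-n\epsilon_n^2}$, (b) a sieve $\mathcal F_n$ with metric entropy $\log N(\epsilon_n,\mathcal F_n,\|\cdot\|_n)\lesssim n\epsilon_n^2$, and (c) exponentially small remaining prior mass. For the rescaled process these conditions reduce to controlling the concentration function $\varphi_{\tilde g,a}(\epsilon) = \inf_{h:\,\|h-\tilde g\|_\infty\le\epsilon}\tfrac12\|h\|_{\mathbb H_a}^2 - \log P(\|W^a\|_\infty\le\epsilon)$ of the squared-exponential field $W^a$ at bandwidth $a$, and then integrating against the $Ga(a_0,b_0)$ prior placed on $a^d$.

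The two decisive estimates are then the decentering term and the small-ball term. For the former I would extend $\tilde g$ to a $C^s$ function on a tubular neighborhood of $\Phi(\mathcal M)$ in $\mathbb R^m$ via a Whitney-type extension, and apply the standard SE RKHS-approximation bounds: at bandwidth $a$ one approximates $\tilde g$ to accuracy $\epsilon \asymp a^{-s}$ with RKHS norm growing only sub-exponentially in $a$. For the latter, the essential point is that $-\log P(\|W^a\|_\infty\le\epsilon)$ scales like $a^{d}\bigl(\log(a/\epsilon)\bigr)^{1+d}$, with exponent the \emph{intrinsic} dimension $d$ rather than the ambient $m$ or $D$, because the governing metric entropy is that of $\Phi(\mathcal M)$, whose $\delta$-covering number grows like $\delta^{-d}$. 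Choosing $a_n\asymp n^{1/(2s+d)}$ so that $a_n^d\asymp n\epsilon_n^2$ and integrating the gamma bandwidth prior balances both terms, yielding the rate $\epsilon_n\asymp n^{-s/(2s+d)}$, with the power $d+1$ on $\log n$ emerging from the adaptation over the random bandwidth.

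The main obstacle is precisely this intrinsic-dimension accounting: I must show that both the small-ball exponent and the sieve entropy are controlled by $d$ and not by the ambient dimension. This rests on geometric covering-number estimates for the curved set $\Phi(\mathcal M)$, transporting the positive reach and bounded second fundamental form of $\mathcal M$ through the diffeomorphism, together with the observation that $\|\cdot\|_n$ averages only over design points lying on $\Phi(\mathcal M)$, so the posterior never pays for the orthogonal ambient directions. Tracking how the distortion constants of $\Phi$ enter these covering numbers, and arranging the gamma-prior integration to produce exactly the $(\log n)^{d+1}$ factor, is the delicate part; the remaining items (the Borell--TIS bound for the sieve mass and the Whitney extension of $\tilde g$) are routine.
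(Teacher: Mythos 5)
This theorem is not proved in the paper at all: it is imported verbatim, with citation, from Yang and Dunson (2016), and the authors rely on that reference for its validity. Your blind proposal is therefore not competing with an in-paper argument but reconstructing the proof of the cited source, and it does so faithfully: the reduction to regression on $\Phi(\mathcal M)$ via $\tilde g = g\circ(\Phi|_{\mathcal M})^{-1}$, the appeal to the Ghosal--van der Vaart / van der Vaart--van Zanten machinery through the concentration function of the rescaled squared-exponential process, the decentering bound via a Whitney extension of $\tilde g$ to a tubular neighborhood, the intrinsic-dimension small-ball estimate $-\log P(\|W^a\|_\infty\le\epsilon)\asymp a^d(\log(a/\epsilon))^{1+d}$ driven by the $\delta^{-d}$ covering numbers of $\Phi(\mathcal M)$, and the integration over the gamma prior on $a^d$ are exactly the ingredients of the original proof. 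Your identification of the intrinsic-dimension accounting as the decisive step is also the right emphasis, and it is precisely the point the paper exploits (the contraction rate depends on $d$, not on $D$ or $m$). The only detail worth flagging is the cap $s\le 2$ in the hypothesis, which your sketch does not explain: it arises because the RKHS-approximation argument on the manifold in Yang--Dunson is carried out only to second order, so smoothness beyond $C^2$ cannot be exploited; this does not affect the correctness of your outline but is the reason the stated rate saturates at $s=2$.
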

We say the posterior contraction rate w.r.t $||.||_n$ of $g_0$ is at least $\epsilon_n$ if
\begin{equation} \label{eqa:posterior_contraction}
    \prod(||g-g_0||_n > \epsilon_n|S_n) \rightarrow 0, \text{in probability as }n \rightarrow \infty
\end{equation}
where $S_n = \{(x_i,y_i)\}_{i=1}^n$ denotes the dataset, $\prod(\mathcal{E}|S_n)$ is the posterior of $\mathcal{E}$ and $||g-g_0||_n^2 = \frac{1}{n}\sum_{i=1}^n (g(x_i)-g_0(\Phi(x_i)))^2$. 

Theorem \ref{theo:contraction} states that the discrepancy between $g_0$ and $g$ has a convergence rate of $n^{-s/(2s+d)}$ and it is shown that this is the optimal rate that can be obtained using Gaussian Process regression \cite{10.1214/15-AOS1390,JMLR:v17:14-230}. This is convenient for high-dimensional BO since we not only achieve the optimal posterior contraction rate for the surrogate model but also enable us to optimize the acquisition function in the low-dimensional projected space. 

\subsection{Additional Discussion on the advantage of a random projection}
Another advantage of a random orthogonal projection matrix is that it can reduce the noise effect of near-manifold data points as discussed in Section \ref{sect:discuss} in the main paper. Specifically, when $h$ is not accurately learned, the suggested points by $h$ may not lie on the manifold $\mathcal M$. For example, $h$ suggests the point $x_i$ while the true point in $\mathcal M$ should be $z_i$. We call the gap $\delta^i = x_i - z_i$ the noise. The more accurately the mapping $h$ is learned, the smaller the noise is. As shown in \cite{JMLR:v17:14-230}, the random projection $\mathbf{A}$ can reduce the negative effects of noise on the convergence of learning the function $g_0$. More precisely, let $A_{ij}$ be the element in $i-th$ row and $j-th$ column of random projection $\mathbf{A}$ and  $\delta^{i}_j$ be the $j-th$ element of noise $\delta^i$. Assume $\delta^i_j$ is i.i.d with $E[\delta^i_j] =0$ and $E[||\delta^i_j||^2] < \infty$ $\forall i = \overline{1,n}$. Then using the Lemma 2.9.5 in \cite{Vaart1996WeakCA}, we obtain:
\begin{equation*}
    \sqrt{D}\sum_{j=1}^D A_{ij}\delta^s_j \rightarrow \mathcal{N}(0, \text{cov}(\delta^s_1)) \quad \forall i=\overline{1,m}; \forall s = \overline{1,n}
\end{equation*}

Therefore, $\mathbf{A}\delta^s = \mathcal{O}(D^{-1/2}) \forall s = \overline{1,n}$, which reduces the magnitude of noise in the original space as high-dimensional $D$ increases. We remark that existing works using the encoder-decoder framework do not discuss the effect of these noises when learning feature mapping. In contrast, our proposed algorithm uses random projection, and therefore it can compress the noise of the near-manifold data points.

\subsection{Discussion on the convergence of the proposed RPM-BO algorithm} \label{appendix: converge_RPM_BO}
As we presented our approach for the high-dimensional Bayesian optimization problem in the main paper, instead of learning directly the objective function $f$, we propose to learn a feature mapping $h$ onto a high-dimensional latent space in a semi-supervised manner and then to learn the high-dimensional representation $g$ of the function $f$. 

By using random projection matrix $\mathbf{A}$ as we discussed in the main paper, we can learn exactly the function $g$ via a low-dimensional GP regression. Therefore, if we assume that the feature mapping $h$ is also accurately learned, then the convergence rate of our proposed BO algorithm is equivalent to the convergence rate of the a BO algorithm in low dimension $m$. For example, in the noise-free case, our proposed algorithm using EI converges at rate $\mathcal O(n^{-1/m})$, where $n$ is the number of samples and $m$ is the input dimension. The proof can be done similarly as in \cite{Wang2016BayesianOI}. In the noisy case, our algorithm converges at rate $\mathcal O(\frac{\gamma_n}{\sqrt{n}})$, where $\gamma_n$ is the maximum information gain of $n$ samples \cite{tran-the22a}.


\section{Experiment Details} \label{appendix:exp_setting}
\subsection{Detail setting for baselines}
All the algorithms use identical initial datasets in every replication to initialize the surrogate function, obtained via random Latin hypercube designs. For the VAE-BO model, we built an encoder having 4 hidden layers. The first layer has 512 hidden units followed by a ReLu activation function. The second layer has 256 hidden units followed by a tanh activation function. The third and the fourth layers have 64 and 32 hidden units, respectively and both apply the ReLu activation function. The decoder is the inverse of the encoder. In addition, we clipped the output of the decoder to the original search space $\mathcal{X}$ for the evaluation. Similar to \cite{Notin2021ImprovingBO}, we jointly train VAE model and the surrogate model. For computation efficiency, we only update the full model after 50 iterations. Since the dimension in latent is small, we use original UCB-BO instead of TuRBO as proposed in \cite{Notin2021ImprovingBO}. To train this VAE model, we created a synthetic training dataset including 5000 points sample uniformly in the search space $\mathcal{X}$. We used a GP model with the 5/2-Matern kernel and constant mean function for VAE-BO. We use Python 3, GPyTorch \cite{NEURIPS2018_27e8e171} and BoTorch \cite{balandat2020botorch}. For SAASBO, to reduce runtime and complexity, we reduce the $N_{\text{warmup}}=128$, $N_{\text{post}}=128$ and $L=8$ as suggested by the author in \cite{Eriksson2021HighDimensionalBO}. For TuRBO, we use 5 trust regions and set the batch size to 10 as suggested by the author. For all other algorithms, we used the authors’reference implementations. Especially, for the SILBO model, we use the \textit{bottom up} approach. The code for REMBO, SIRBO, HeSBO, and SILBO can be found at \url{https://github.com/cjfcsjt/SILBO}, for SAASBO can be bound at \url{https://github.com/martinjankowiak/saasbo} and TuRBO can be found at \url{https://github.com/uber-research/TuRBO}.

\subsection{Detail setting for RPM-BO}
For our method RPM-BO, we chose $a_0 = 1, b_0 = 0.15$ so that the lengthscale parameter $a$ of low-dimensional GP $g_0$ follows $Ga(1,0.15)$. We also chose SE (squared exponential) kernel as a kernel for the mGP $f$: $k_m(x_i,x_j) = k(h(x_i), h(x_j)) = \exp\{-a^2 ||h(x_i)-h(x_j)||^2\}$. To build the semi-supervised loss function, we set $\gamma =1, p = 5, q = 100$. To learn the parameter $\theta_h$ of RPM-BO, we maximized the semi-supervised loss function of a mGP $f$ with Adam optimizer. In case we cannot mathematically represent the formulation of mapping $h$, we will construct $h$ as a neural network from $\mathbb{R}^D \rightarrow \mathbb{R}^D$ with single 35 units hidden layer, and as the activation function, we use ReLU activation. To re-scale the output into the search space $\mathcal{X}$, we divide all the elements of the NN's output by the maximum absolute value of the elements: $v(x) = \frac{x}{\operatorname{max}_{i=\overline{1,D}}|x_i|}$ where $x = (x_1, x_2, ..., x_D)$. To optimize the acquisition function, we use the L-BFGS-B method.


\section{Supplementary Results} \label{appendix:exp_result}
\subsection{Statistical significance test}
We show the p-values to test the performance of RPM-BO for the experiments in the main paper. Indeed, we use Wilcoxon Signed-Rank Test with the real value samples $\{X_i - Y_i\}_{i=1}^{20}$ where $\{X_i\}$ are the best function value of the proposed method (lower is better) and $\{Y_i\}$ are the best function value of baseline methods. The Alternative Hypothesis is the one-sided hypothesis where the distribution underlying $X_i-Y_i$ is stochastically less than a distribution symmetric about zero. If the $p$-value is less than 0.05, we can reject the Null Hypothesis, which says the median distribution of the difference is located at 0. The result is shown in Table \ref{tab: p_value_synthetic_AS},\ref{tab: p_value_synthetic_HS}, \ref{tab: p_value_synthetic_AM},  \ref{tab: p_value_synthetic_HM}, \ref{tab: p_value_real}. Overall, the p-value is less than 0.05 for most cases leading to the fact that we can reject the Null Hypothesis.

\begin{table*}[t]
            \caption{p-values between RPM-BO and baselines on Ackley Sphere experiments.}
            \centering
            \begin{tabular}{c|c|c|c}
            \toprule
            \textit{Ackley Sphere}  & \textbf{500D-10d} & \textbf{1000D-10d} & \textbf{1500D-10d} \\
            \textbf{HeSBO}  & $1.30\text{E}-4$ & $3.14\text{E}-5$ & $6.80\text{E}-3$ \\
            \textbf{SILBO}  & $9.53\text{E}-7$ & $1.81\text{E}-5$ & $1.97\text{E}-4$ \\
            \textbf{SIRBO}  & $9.53\text{E}-6$ & $2.86\text{E}-6$ & $1.30\text{E}-4$ \\
            \textbf{REMBO}  & $9.53\text{E}-7$ & $9.53\text{E}-7$ & $9.53\text{E}-7$ \\
            \textbf{VAE-BO}  & $9.53\text{E}-7$ & $2.86\text{E}-6$ & $9.53\text{E}-7$ \\
            \textbf{TuRBO}  & $6.07\text{E}-1$ & $6.67\text{E}-6$ & $6.67\text{E}-6$ \\
            \textbf{Random}  & $9.53\text{E}-7$ & $1.91\text{E}-6$ & $9.53\text{E}-7$ \\
            \textbf{SAASBO}  & $2.86\text{E}-2$ & $6.83\text{E}-3$ & $9.76\text{E}-4$ \\
            \hline
            \end{tabular}
            
            \label{tab: p_value_synthetic_AS}
        \end{table*}

\begin{table*}[t]
            \caption{p-values between RPM-BO and baselines on Hyper Sphere experiments.}
            \centering
            \begin{tabular}{c|c|c|c}
            \toprule
            \textit{Hyper Sphere}  & \textbf{500D-10d} & \textbf{1000D-10d} & \textbf{1500D-10d} \\
            \textbf{HeSBO}  & $3.14\text{E}-5$ & $1.90\text{E}-6$ & $1.90\text{E}-6$ \\
            \textbf{SILBO}  & $1.22\text{E}-4$ & $2.86\text{E}-6$ & $1.90\text{E}-6$ \\
            \textbf{SIRBO}  & $3.14\text{E}-5$ & $4.25\text{E}-4$ & $9.53\text{E}-7$ \\
            \textbf{REMBO}  & $9.53\text{E}-7$ & $9.53\text{E}-7$ & $9.53\text{E}-7$ \\
            \textbf{VAE-BO}  & $2.86\text{E}-6$ & $9.53\text{E}-7$ & $9.53\text{E}-7$ \\
            \textbf{TuRBO}  & $1.57\text{E}-3$ & $1.90\text{E}-6$ & $9.53\text{E}-7$ \\
            \textbf{Random}  & $1.90\text{E}-6$ & $9.53\text{E}-7$ & $9.53\text{E}-7$ \\
            \textbf{SAASBO}  & $1.22\text{E}-4$ & $1.22\text{E}-4$ & $3.05\text{E}-3$ \\
            \hline
            \end{tabular}
            
            \label{tab: p_value_synthetic_HS}
        \end{table*}

\begin{table*}[t]
            \caption{p-values between RPM-BO and baselines on Ackley Mix experiments.}
            \centering
            \begin{tabular}{c|c|c|c}
            \toprule
            \textit{Ackley Mix}  & \textbf{500D-10d} & \textbf{1000D-10d} & \textbf{1500D-10d} \\
            \textbf{HeSBO}  & $9.53\text{E}-7$ & $9.53\text{E}-7$ & $9.53\text{E}-7$ \\
            \textbf{SILBO}  & $9.53\text{E}-7$ & $9.53\text{E}-7$ & $9.53\text{E}-7$ \\
            \textbf{SIRBO}  & $9.53\text{E}-7$ & $9.53\text{E}-7$ & $9.53\text{E}-7$ \\
            \textbf{REMBO}  & $9.53\text{E}-7$ & $9.53\text{E}-7$ & $9.53\text{E}-7$ \\
            \textbf{VAE-BO}  & $9.53\text{E}-7$ & $9.53\text{E}-7$ & $9.53\text{E}-7$ \\
            \textbf{TuRBO}  & $1.91\text{E}-6$ & $4.76\text{E}-6$ & $9.53\text{E}-7$ \\
            \textbf{Random}  & $9.53\text{E}-7$ & $9.53\text{E}-7$ & $9.53\text{E}-7$ \\
            \textbf{SAASBO}  & $1.22\text{E}-4$ & $9.53\text{E}-7$ & $9.53\text{E}-7$ \\
            \hline
            \end{tabular}
            
            \label{tab: p_value_synthetic_AM}
        \end{table*}

\begin{table*}[t]
            \caption{p-values between RPM-BO and baselines on Hyper Mix experiments.}
            \centering
            \begin{tabular}{c|c|c|c}
            \toprule
            \textit{Hyper Mix}  & \textbf{500D-10d} & \textbf{1000D-10d} & \textbf{1500D-10d} \\
            \textbf{HeSBO}  & $2.86\text{E}-6$ & $9.53\text{E}-7$ & $9.53\text{E}-7$ \\
            \textbf{SILBO}  & $2.86\text{E}-6$ & $9.53\text{E}-7$ & $9.53\text{E}-7$ \\
            \textbf{SIRBO}  & $3.53\text{E}-4$ & $9.53\text{E}-7$ & $9.53\text{E}-7$ \\
            \textbf{REMBO}  & $9.53\text{E}-7$ & $9.53\text{E}-7$ & $9.53\text{E}-7$ \\
            \textbf{VAE-BO}  & $1.97\text{E}-4$ & $9.53\text{E}-7$ & $9.53\text{E}-7$ \\
            \textbf{TuRBO}  & $6.04\text{E}-4$ & $9.53\text{E}-7$ & $9.53\text{E}-7$ \\
            \textbf{Random}  & $6.67\text{E}-5$ & $9.53\text{E}-7$ & $9.53\text{E}-7$ \\
            \textbf{SAASBO}  & $1.22\text{E}-4$ & $9.53\text{E}-7$ & $9.53\text{E}-7$ \\
            \hline
            \end{tabular}
            
            \label{tab: p_value_synthetic_HM}
        \end{table*}

\begin{table*}[t]
            \caption{p-values between RPM-BO and baselines on Real-world experiments.}
            \centering
            \begin{tabular}{c|c|c|c}
            \toprule
            \textit{Real-world}  & \textbf{Lasso Leukemia} & \textbf{Lasso Hard} & \textbf{Mujoco Ant} \\
            \textbf{HeSBO}  & $9.53\text{E}-7$ & $9.53\text{E}-7$ & $9.76\text{E}-4$ \\
            \textbf{SILBO}  & $4.76\text{E}-6$ & $9.53\text{E}-7$ & $2.92\text{E}-3$ \\
            \textbf{SIRBO}  & $6.67\text{E}-6$ & $9.53\text{E}-7$ & $9.76\text{E}-4$ \\
            \textbf{REMBO}  & $9.53\text{E}-7$ & $9.53\text{E}-7$ & $9.53\text{E}-7$ \\
            \textbf{VAE-BO}  & $1.90\text{E}-6$ & $3.14\text{E}-5$ & $1.11\text{E}-1$ \\
            \textbf{TuRBO}  & $1.61\text{E}-4$ & $1.04\text{E}-4$ & $9.76\text{E}-4$ \\
            \textbf{Random}  & $9.53\text{E}-7$ & $9.53\text{E}-7$ & $9.76\text{E}-4$ \\
            \textbf{SAASBO}  & $3.66\text{E}-4$ & $6.71\text{E}-3$ & $3.05\text{E}-3$ \\
            \hline
            \end{tabular}
            
            \label{tab: p_value_real}
        \end{table*}

\subsection{On the choice of projection dimension} \label{appendix:test_m}
We do experiments to test the performance of our method for different values of $m$. We re-use $D$-dimensional Ackley function with an effective $15$-dimensional subspace with a mixed structure. We use three different values of $m$: $m=10,15,20$. Figure \ref{fig:m} shows the result of our methods for different values of $m$. We can see  that the performance RPM-BO does not deviate too much for different values of $m$. The value of $m$ also represents the dimensionality of the search space of the acquisition function. Therefore, it directly affects the computing burden of maximizing acquisition function, and the effectiveness of BO optimization. In our experiments on $m$, we can observe that our algorithm achieves robustness regarding the choice of projected dimension.

\begin{figure}
    \centering
    \includegraphics[width=1.0\linewidth]{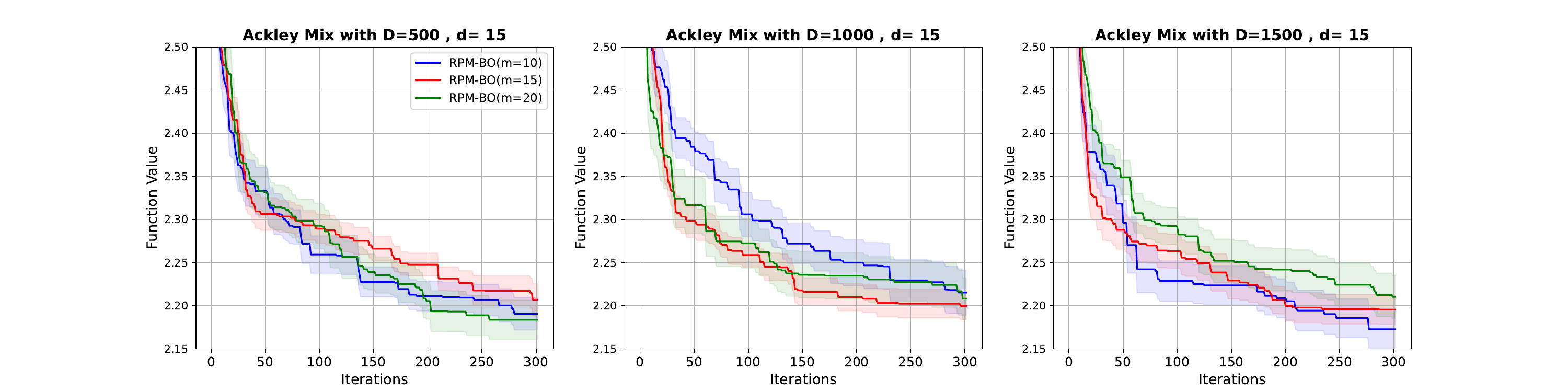}
    \caption{Performance on Ackley function with effective 15-dimensional mix manifold $\mathbf M$ for varying projection dimension $m$.}
    \label{fig:m}
\end{figure}

\subsection{Runtime experiment} \label{appendix:runtime}
We measure the runtime of RPM-BO and each baseline method on the Ackley Mix 1000D test problem. We run each method for 300 evaluations independently using 20 random seeds and calculate the average wall clock time per restart. Note that for SAASBO, we only run 100 evaluations per seed due to its high complexity. The result is shown in Table \ref{tab:runtime}. As we can see, although the run time of RPM-BO is the second longest, it is still 8 times faster than SAASBO. However, the result of RPM-BO is superior to the other baselines. As expected, the Random Search method achieves the fastest wall clock time and the random projection-based method have a similar run time to each other.

\begin{table*}[h]
    \caption{Average runtime per restart on the Ackley with effective mixed manifold for 1000 input dimensions. Runtimes are obtained using AMD Ryzen Threadripper Pro 3955WX 16-Cores CPU.} 
    \centering
    \begin{tabular}{lccccc}
\hline \textbf{Methods} & \textbf{Time/restart (seconds)} & \textbf{Best function value}\\
\hline $\operatorname{RPM-BO}$  & $1470.5439$ & $\mathbf{2.2070 \pm 0.1685}$\\
$\operatorname{HeSBO}$          & $80.8997$   & $3.1328 \pm 0.2223$\\
$\operatorname{SIRBO}$          & $57.6311$   & $3.1151 \pm 0.0755$\\
$\operatorname{SILBO}$          & $148.0933$  & $3.3073 \pm 0.1120$ \\
$\operatorname{REMBO}$          & $61.2815$   & $3.5734 \pm 0.2168$\\
$\operatorname{TuRBO}$          & $78.7863$   & $2.9815 \pm 0.4203$\\
$\operatorname{Random Search}$  & $\leq 5$    & $3.5123 \pm 0.0771$\\
$\operatorname{SAASBO}$         & $9358.7181$ & $2.8542 \pm 0.2543$\\
\hline
\end{tabular}
    
    \label{tab:runtime}
\end{table*}

\subsection{On the overfitting of the orthogonal projection $h$} \label{appendix:overfit}
We consider the overfitting issue of mapping $h$ when the training set is small. We create the toy experiment as follows.  We choose the Hyper function with effective spherical manifold $\mathcal{S}^{10}$ for the objective function. We randomly create 20 test points in $[-1,1]^{1000}$. Next, we randomly create 5 labeled datasets each with sizes of 10, 30, 50, 70, and 100. For the unlabeled data, we uniformly random 100 points $x_i'$ in $[-1,1]^{1000}$ and 10 coefficients $\lambda_j$ in $(0,1)$. We train the mapping $h$ in four settings. The first setting ($\operatorname{Original-GP}$) is the original GP without using $h$; the second setting ($\operatorname{Sup-NN}$) is using a neural net with the supervised loss function in Equation (6) in the main paper; the third setting ($\operatorname{Semi-NN}$) is training the neural net with the semi-supervised loss function in Equation (8) in the main paper and the last setting ($\operatorname{Geometry-aware}$) is using the geometry-aware spherical representation in Equation (5) in the main paper. After the training phase, we calculate the following criteria on the test set:
\begin{equation} \label{eqa:pos_loss}
    L(h)=\Sigma_{k=1}^{100}\frac{\Sigma_{(x,y) \in \text{test set}}\left(g^{(k)}(h(x)) - y\right)^2}{|\text{test set}|}
\end{equation}
where $g^{(k)}$ is the sample of the trained high-dimensional GP $g$ corresponding to $h$. A lower value in Equation (\ref{eqa:pos_loss}) implies a faster posterior contraction rate defined in Equation (\ref{eqa:posterior_contraction}), hence a lower overfitting issue on the test set. We run each experiment 30 times and report the mean, and standard deviation in Table \ref{tab: overfit}. As we can see, with semi-supervised training, our trained model achieves a lower error on the test set than supervised training across the different sizes of labeled datasets. Moreover, if we can exploit the geometry property into the representation of mapping $h$, we achieve a superior result compared to the semi-supervised training of NN. It is worth noting that using geometry-aware representation in Equation (8) in the main paper nullifies the unsupervised consistency loss, allowing for a purely supervised training approach without the necessity of unlabeled data. Furthermore, in contrast to the $\operatorname{Original-GP}$ approach, training GP through our composition results in lower loss values. This underscores the observation that training the GP on a manifold yields faster convergence.

\begin{table*}[h]
    \caption{Comparison of loss function defined in Equation (\ref{eqa:pos_loss}) for four settings.}
    \centering
    \begin{tabular}{lccccc}
\hline \textbf{Labels} & \textbf{10} & \textbf{30} & \textbf{50} & \textbf{70} & \textbf{100}\\
\hline $\operatorname{Original-GP}$  & $113.77 \pm 1.94$ & $127.83 \pm 3.21$ & $119.28 \pm 2.41$ & $117.50 \pm 2.86$ & $115.63 \pm 3.24$\\
$\operatorname{Sup-NN}$  & $215.61 \pm 27.53$ & $138.23 \pm 18.61$ & $106.29 \pm 7.34$ & $99.71 \pm 9.54$ & $95.54 \pm 7.69$\\
$\operatorname{Semi-NN}$        & $103.25 \pm 11.65$ & $100.77 \pm 14.67$ & $94.51 \pm 12.23$ & $92.22 \pm 11.64$ & $90.49 \pm 9.14$\\
$\operatorname{Geometry-aware}$ & $95.84 \pm 1.33$ & $93.37 \pm 1.42$ & $90.96 \pm 16.60$ & $84.42 \pm 8.83$ & $81.70 \pm 7.57$\\
\hline
\end{tabular}
    
    \label{tab: overfit}
\end{table*}

\subsection{Additional Synthetic Experiments} \label{appendix:additional_synthetic}
\paragraph{Linear $d$-dimensional manifold} We build function $f(x) = g(Rx)$ where $R \in \mathbb R^{d \times D}$ is the orthogonal matrix. This leads to the fact that $f$ has an effective dimension as stated in \cite{Wang2016BayesianOI}. Hence, we can create the feature map $h(x) = h_{B}(x) = BB^Tx$ where $B \in  \mathbb R^{D \times d}$ is an orthogonal matrix. We use Ackley and Rotated-Hyper Ellipsoid to build above the function $g$. We set $m=d=10$. The result is shown in Figure \ref{img:linear}. As we can see, RPM-BO still achieves good performance and outperform the other baselines in the Ackley function.
\begin{figure}[h]
    \centering
    \includegraphics[width=1.0\linewidth]{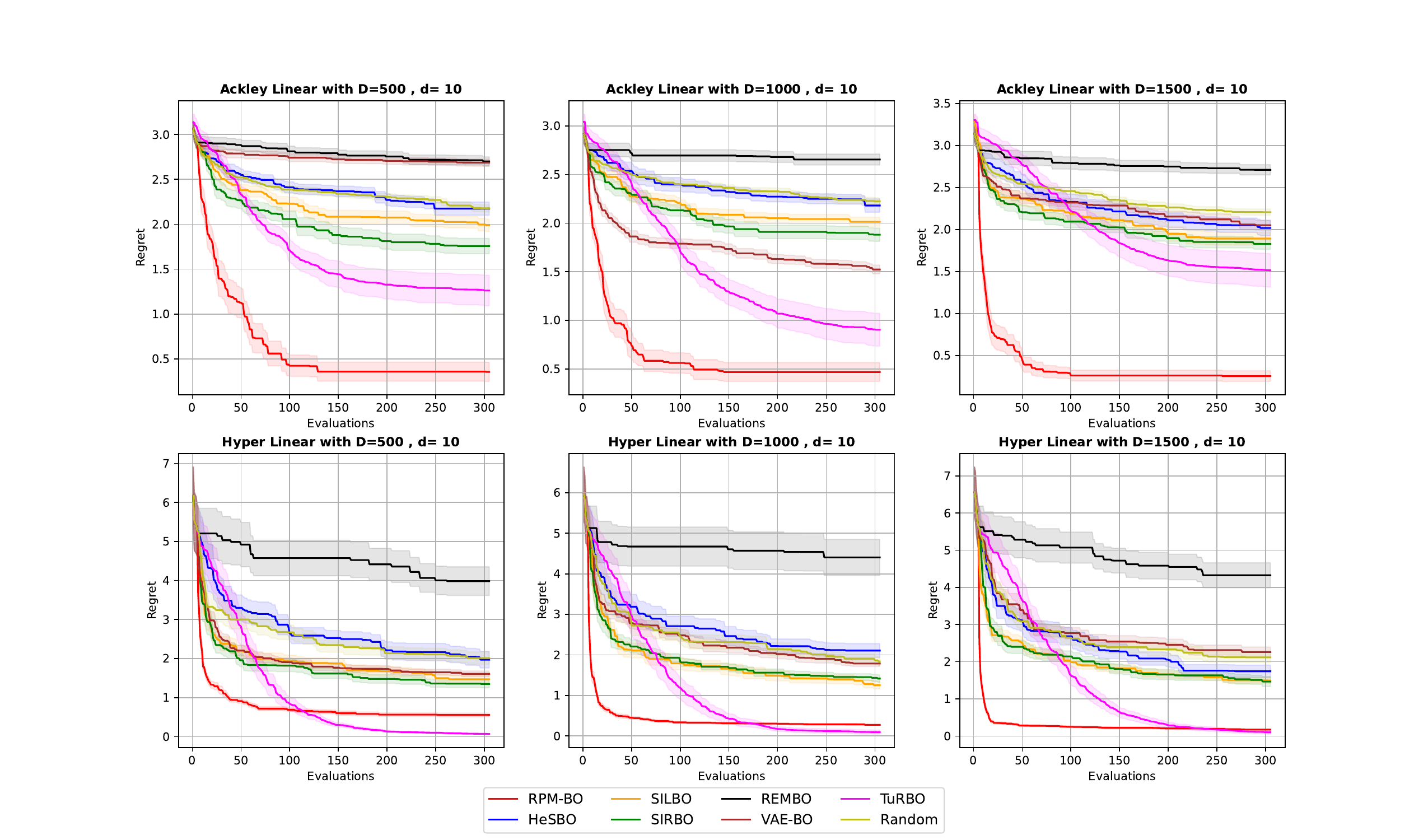}
    \caption{Performances on two standard functions with effective linear manifold for 500, 1000, and 1500 input dimensions. For all cases, the dimension of an effective manifold is 10. The y-axis presents the regret}
    \label{img:linear}
\end{figure}
\paragraph{Test with Levy function}
We test out the method with other synthetic functions. In this case, we test on Levy function. The result is shown in Figure \ref{img:levy_sphere}, \ref{img:levy_mix}. In the spherical case, RPM-BO outperforms the other baseline within 300 evaluations. We also see a decrease in the performance of the TuRBO method when the input dimension gets higher. SAASBO seems to run well at the first 60 evaluations, but due to the limited budget, it can not run with more evaluations. In the mixed case, SAASBO shows the best performance in all three input dimensions. TuRBO shows good performance with 500D Levy Mix experiments but gradually decreases the result. On the other hand, our method still runs competitively with the other baselines.

\paragraph{Test with larger number of iterations}
 To furtfer test the convergence of RPM-BO, we present results for four synthetic functions in Figure \ref{img:large_T}  with 2000 iterations and 5 independent runs. In general, RPM-BO exhibits faster convergence compared to other baseline methods with competitive performance. This underscores the ability to achieve a rapid convergence rate even when the original input dimension is high. Compared to the local trust-region method, TuRBO necessitates exploration in the high-dimensional space to reach the optimum, leading to a requirement for a larger number of iterations.

\begin{figure}[h]
    \centering
    \includegraphics[width=1.0\linewidth]{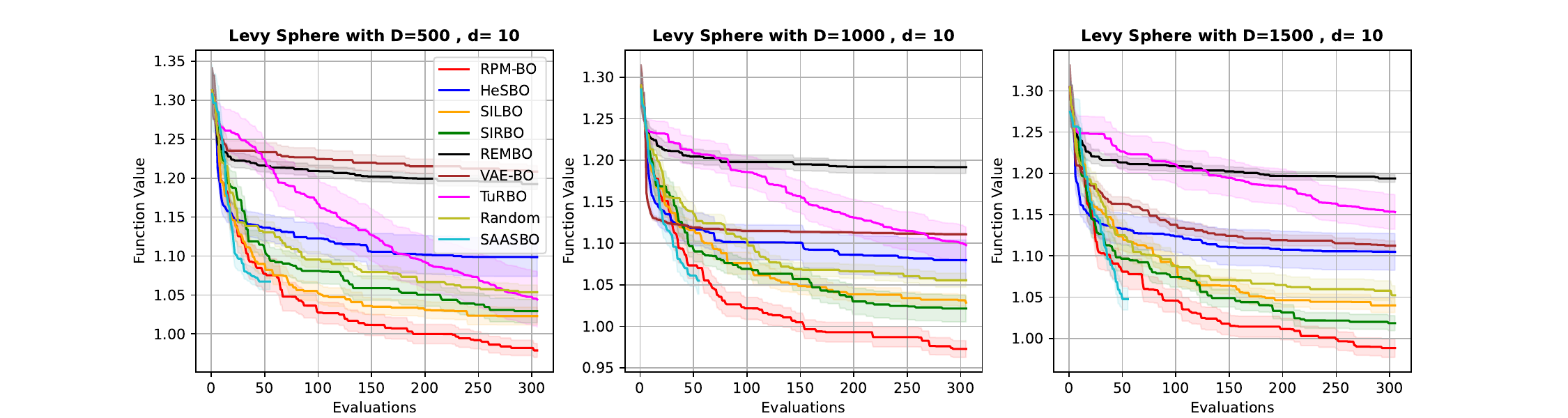}
    \caption{Performances on Levy function with effective sphere manifold for 500, 1000, 1500 input dimensions. For all cases, the dimension of an effective manifold is 10. The y-axis presents the function vale}
    \label{img:levy_sphere}
\end{figure}

\begin{figure}[h]
    \centering
    \includegraphics[width=1.0\linewidth]{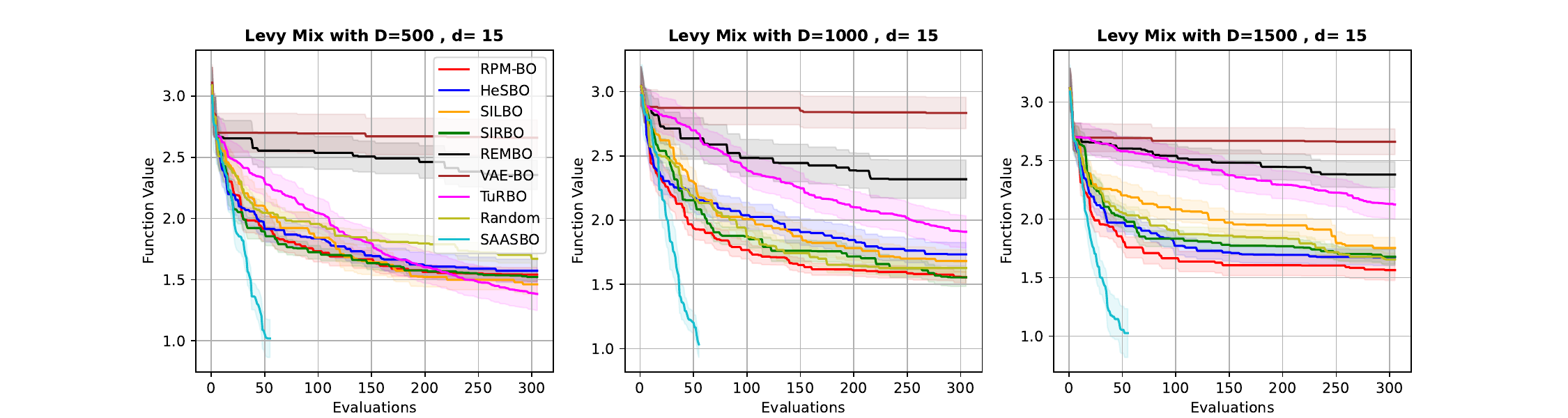}
    \caption{Performances on Levy function with effective mix manifold for 500, 1000, 1500 input dimensions. For all cases, the dimension of the effective manifold is 15. The y-axis presents the function value}
    \label{img:levy_mix}
\end{figure}

\begin{figure}[h]
    \centering
    \includegraphics[width=1.0\linewidth, height=.15\textwidth]{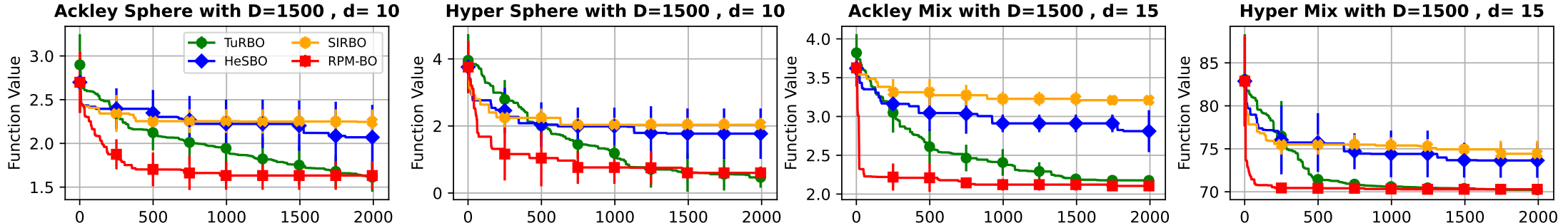}
    \caption{ Performance on 2 functions with 2 types of latent effective manifold.}
    \label{img:large_T}
\end{figure}

\subsection{Benchmark Test Functions}
This section gives the equations of the benchmark test functions considered in the experiments. Namely, we minimize the Ackley, Rotated Hyper-Ellipsoid functions defined as:
\begin{align*}
&f_{\text{Ackley}}(\mathbf{x}) = -20\exp\left({-0.2\sqrt{\frac{\sum_{i=1}^d x_i^2}{d}}}\right)-\exp\left(\frac{\sum_{i=1}^d cos\{2\pi x_i\}}{d}\right) + 20 +\exp(1)\\
&f_{\text{Rotated Hyper-Ellipsoid}}(\mathbf{x}) = \sum_{i=1}^d \sum_{j=1}^i x_j^2\\
&f_{\text{Levy}}(\mathbf{x}) = sin^2(\pi w_1) + \sum_{i=1}^{d-1} (w_i-1)^2 [1 + 10 sin^2(\pi w_i + 1)] + (w_d - 1 )^2 [1+ sin^2(2\pi w_d)]
\end{align*}
where  $w_i = 1 + \frac{x_i - 1}{4} \forall i = \overline{1,d}$, 

\subsection{Real-world Experiments}
For more real-world applications, we apply the RPM-BO algorithm for MuJoCo Humanoid experiments \cite{6386109}. Humanoid is the most difficult task in MuJoCo. It has a total of 6392 parameters. In this experiment, we design the feature map $h$ as the neural network described in Appendix \ref{appendix:exp_setting}. We set the projection dimension for all methods as 15 and also $m=15$. The result is shown in Figure \ref{img:humanoid}. As we can see, our method shows the best performance followed by HeSBO, REMBO, and SILBO. Surprisingly, TuRBO shows the worst performance in this experiment. 

\begin{figure}
    \centering
    \includegraphics[width=0.6\linewidth]{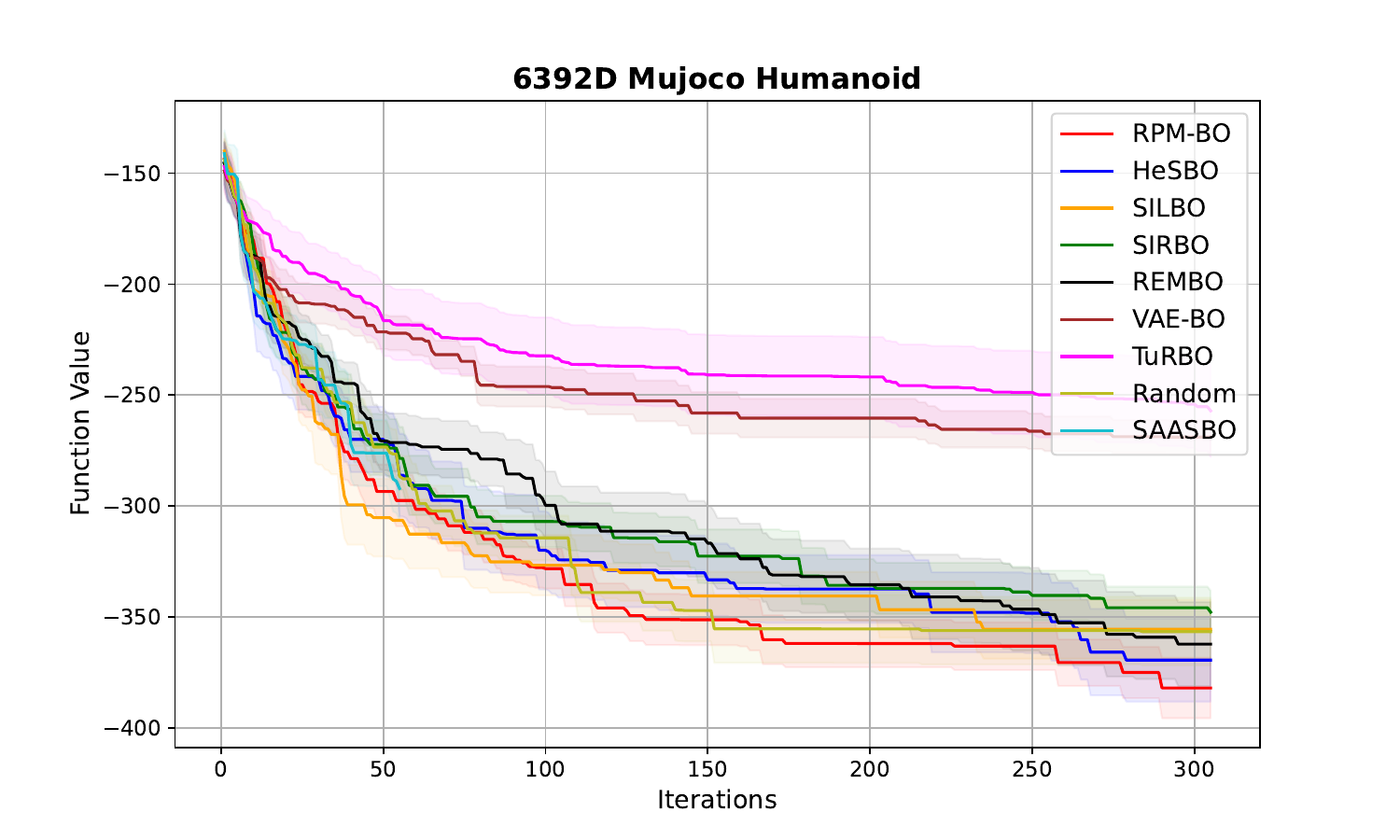}
    \caption{Performances on MuJoCo humanoid experiments. The y-axis show the negative reward (the lower is better).}
    \label{img:humanoid}
\end{figure}

\end{document}